\documentclass{article}

\usepackage[nonatbib,preprint]{neurips_2026}

\usepackage{graphicx}
\usepackage[utf8]{inputenc} 
\usepackage[T1]{fontenc}    
\usepackage{hyperref}       
\usepackage{url}            
\usepackage{booktabs}       
\usepackage{amsfonts}       
\usepackage{amsmath}
\usepackage{amsthm}
\usepackage{enumitem}
\usepackage{mathtools}
\usepackage{nicefrac}       
\usepackage{microtype}      
\usepackage{xcolor}         
\usepackage{hyperref}
\usepackage{cleveref}
\usepackage{appendix}
\usepackage{algorithm}
\usepackage[noend]{algpseudocode}
\usepackage[
backend=biber,
style=numeric,
sorting=none
]{biblatex}
\usepackage[dvipsnames]{xcolor}
\usepackage[textsize=tiny]{todonotes}

\newcommand{\x}{\mathbf{x}}

\newcommand{\upd}{\textup{d}}
\newcommand{\la}{\langle}
\newcommand{\ra}{\rangle}

\newtheorem{condition}{Condition}

\newtheorem{proposition}{Proposition}

\theoremstyle{remark}
\newtheorem{remark}{Remark}
\Crefname{condition}{Condition}{Conditions}

\title{Stein Kernelized Molecular Dynamics for Active Learning of Interatomic Potentials}

\author{%
  Joanna Zou\textsuperscript{1,}\thanks{Correspondence to \texttt{jjzou@mit.edu}.} \quad
  Fraser Birks\textsuperscript{2} \quad
  Dallas Foster\textsuperscript{3} \quad
  Youssef Marzouk\textsuperscript{1}
  \\[1ex]
  \textsuperscript{1}Center for Computational Science \& Engineering, Schwarzman College of Computing, MIT
  \\
  \textsuperscript{2}Warwick Centre for Predictive Modelling, School of Engineering, University of Warwick
  \\
  \textsuperscript{3}NVIDIA 
  \AND
}

\begin{document}

\maketitle

\vspace{-14mm}
\begin{abstract}
    Machine learning interatomic potentials (MLIPs) enable efficient and accurate atomistic simulations but depend critically on the quality and diversity of the training data. We introduce Stein kernelized molecular dynamics (SKMD), an enhanced sampling method that uses interacting particle dynamics to acquire informative training configurations for the active learning and fine-tuning of MLIPs. SKMD corresponds to a stochastic variant of Stein variational gradient descent that is adapted for molecular dynamics by incorporating asynchronous particle updates and a kernel of global atomic descriptors, which provides a symmetry-aware measure of configurational similarity. Unlike other enhanced samplers used in molecular dynamics, SKMD preserves the Boltzmann distribution as the asymptotic distribution of the dynamics. This property enforces a balance between the exploration of diverse configurations and attraction toward high-probability regions of the energy landscape. We further propose an approach to efficient online data acquisition using an adaptive stopping criterion that selects non-redundant training data over the course of simulation. We demonstrate SKMD for the active learning of a neural network model of the M\"uller--Brown potential and the fine-tuning of a MACE interatomic potential for alanine dipeptide. Compared to active learning baselines, our method achieves higher model accuracy in fewer training iterations, with the same number of acquired training samples.
\end{abstract}

\section{Introduction}
\label{sec:intro}

Many modern advances in the atomistic simulation of chemical phenomena stem from the use of machine learning interatomic potentials (MLIPs), data-driven surrogate models of atomistic force fields that enable molecular dynamics (MD) simulation at greater system sizes and time scales than what is feasible with \textit{ab initio} methods. The accuracy of MLIPs depends critically on the quality of the training data: training configurations must be representative of both key thermodynamic states of the system and the transitional states bridging between them in order for the MLIP to correctly characterize chemical properties. Training data of these transitional states, or of unobserved thermodynamic states, are challenging to acquire due to the infrequency of transitions during simulation. Moreover, the high cost of labeling data with quantum-mechanical reference calculations limits the number of samples which can be feasibly added to the training set.  

In \textit{active learning} of MLIPs, we progressively improve the accuracy of the model by alternating between the collection of training data and retraining of the model on the augmented dataset. A wide array of existing active learning approaches for MLIPs leverage subset selection methods---based, e.g., on D-optimal design \cite{Podryabinkin2017}, CUR decomposition \cite{Bernstein2019}, the MaxVol algorithm \cite{Lysogorskiy2021,Lysogorskiy2023}, Gaussian process variance \cite{Jinnouchi2019,Vandermause2020,Vandermause2021,Xie2021,Xie2022}, entropy-maximization \cite{Karabin2020,Subramanyam2025}, query-by-committee \cite{Artrith2012}, and determinantal point processes \cite{Zou2025}---for selecting informative subsets of the simulated MD trajectory to add to the training set. However, standard MD trajectories can remain trapped in energy basins, producing highly correlated data irrespective of the subset selection technique, which limits model improvement.

For this reason, recent active learning methods utilize \textit{enhanced sampling} in molecular dynamics to promote the exploration of novel regions of configuration space, including metadynamics \cite{Herr2018}, uncertainty-driven dynamics \cite{Kulichenko2023}, and hyperactive learning \cite{vanderOord2022}. These methods introduce an adaptive biasing force which drives dynamics toward underrepresented regions of the configuration space and define acquisition criteria for selecting nonredundant training data over the course of simulation. However, the biased dynamics do not retain fidelity to the Boltzmann distribution associated with the MLIP, and the acquisition criteria generally do not take into account the underlying energy landscape. Therefore, the chosen training configurations may not be representative of physically meaningful configurations or the true distribution of thermodynamic states. 

We address these problems with Stein kernelized molecular dynamics (SKMD), a novel enhanced sampling method for active learning of MLIPs. Our idea is to adapt variational inference approaches in Bayesian inference and statistics for sampling problems in molecular dynamics. SKMD is derived from Stein variational gradient descent (SVGD) \cite{Liu2016}, a particle-based variational inference algorithm which utilizes an interacting particle set to approximate a target distribution. Our method improves upon the other enhanced samplers by retaining the Boltzmann distribution of the MLIP as the asymptotic distribution of the dynamics. Furthermore, the SKMD biasing force offers a means to define an acquisition criterion which balances the selection of diverse configurations with those informed by the energy landscape.

Active learning is related to model fine-tuning, which improves the accuracy of model outputs in regions of data space as specified by a reward function. While flow-based generative methods have been adopted for Boltzmann sampling and fine-tuning \cite{Noe2019,Plainer2025,Nam2025}, they require that training data already exist in regions which are targeted for additional sampling and can struggle to sufficiently sample regions with poor data coverage. We argue that an enhanced sampling framework is better suited for the task of active learning, as local particle transforms enable the discovery of thermodynamic states which are unseen by the existing training data. 

We summarize our contributions as follows: 
\begin{itemize}[leftmargin=*]
    \item We propose SKMD as a stochastic variant of SVGD implemented with asynchronous particle updates and a kernel of global atomic descriptors, thus adapting the algorithm to molecular dynamics settings.
    \item We prove that the asymptotic distribution of SKMD dynamics is the Boltzmann distribution of the system. In \Cref{prop:mfl}, we show that its mean-field limit coincides with that of SVGD, which converges to the Boltzmann distribution under appropriate conditions.
    \item We develop an approach to online data acquisition in the form of an adaptive stopping criterion for SKMD, discussed in greater detail in \Cref{app:herding} and \Cref{app:adaptstop}. 
    \item We show that SKMD outperforms other sampling techniques for data generation and active learning of MLIPs, demonstrated on problems of learning a neural network model of the M\"uller-Brown potential and fine-tuning a MACE foundation model of organic molecules for alanine dipeptide. 
\end{itemize}

\section{Background}
\label{sec:background}

\subsection{Machine learning atomistic force fields}

\textbf{Interatomic potentials} model the total potential energy of a configuration of $N$ atoms as a function of the atomic positions $\x = (x_1, \ldots, x_N) \in \mathbb{R}^{3N}$, where $x_n \in \mathbb{R}^3$. Whereas classical empirical potentials \cite{Tersoff1988,Stillinger1985,Daw1984,Baskes1994} are analytical functions taking simple parametric forms, machine learning interatomic potentials (MLIPs) are flexible function approximations learned from higher-fidelity reference data such as density functional theory (DFT) calculations. Let $V: \mathbb{R}^{3N} \to \mathbb{R}$ and $-\nabla_\x V: \mathbb{R}^{3N} \to \mathbb{R}^{3N}$ correspond to reference DFT calculations of the potential energy and forces. A MLIP $V_\theta: \mathbb{R}^{3N} \to \mathbb{R}$ is typically learned from a weighted least squares objective, where the model parameters $\theta \in \Theta$ are the minimizer of the loss function $\mathcal{L}$,
\begin{equation}
    \label{eq:loss}
    \mathcal{L}(\theta) = \frac{\lambda_0}{K} \sum_{k=1}^K |V_\theta(\x^k) - V(\x^k) |^2 + \frac{\lambda_1}{K} \sum_{k=1}^K || \nabla_\x V_\theta(\x^k) - \nabla_\x V(\x^k) ||^2 \ ,
\end{equation}
defined by $\lambda_0, \lambda_1 > 0$ and evaluated at a training set $\mathcal{D} \coloneqq \{ (\x^k, V(\x^k), \nabla_\x V(\x^k )) \}^K_{k=1}$.

\textbf{Atomic descriptors} are feature representations of local atomic environments which form the basis of many MLIPs. A local descriptor $\tilde{g}(\x) \in \tilde{\Omega} \subseteq \mathbb{R}^d$ can be learned from data, as with invariant latent representations from GNN potentials such as NequIP \cite{Batzner2022}, Allegro \cite{Musaelian2023}, or MACE \cite{Batatia2022,Batatia2025}, or constructed explicitly from symmetry-adapted bases of local atomic neighborhoods which enforce invariances under SO(3), such as SOAP descriptors \cite{Bartok2010}, bispectrum components \cite{Thompson2015}, and ACE basis functions \cite{Drautz2019}. Whereas local descriptors are per-atom representations, \textbf{global descriptors} $g(\x) \in \Omega \subseteq \mathbb{R}^d$ are representations of the multi-atom configuration, typically a composition of local descriptors $\tilde{g}(\x) = [\tilde{g}^1(\x),\ldots,\tilde{g}^N(\x)] \in \tilde{\Omega}^N \subseteq \mathbb{R}^{Nd}$. An example of a global descriptor which is used in \cite{Karabin2020} is the mean of the local descriptors, $g(\x) = \frac{1}{N} \sum_{n=1}^N\tilde{g}^n(\x)$. Depending on the application, a more informative global descriptor could be the mean of local descriptors of a subset of the atoms, e.g., of interfacial atoms within a bulk configuration.

\textbf{Boltzmann sampling} in molecular dynamics consists of generating samples distributed according to the Boltzmann distribution of configurations of the atomic system, $\pi(\x) = \frac{1}{Z} \exp \big( -\beta V(\x) \big)$, where $Z$ is the normalizing constant. Given a MLIP which is cheaper to evaluate than the reference, one generally approximates $\pi$ with the Boltzann distribution of the MLIP, $\pi_\theta(\x) = \frac{1}{Z_\theta} \exp \big( -\beta V_\theta(\x) \big)$. A common approach to Boltzmann sampling is to perform molecular dynamics simulation with a Langevin thermostat, which corresponds to underdamped Langevin dynamics. If the dynamics are ergodic, then the marginal invariant distribution of positions coincides with $\pi_\theta$ \cite{Lelievre2010}. 

In the overdamped limit of Langevin dynamics, $\pi_\theta$ remains the marginal invariant distribution of positions and the molecular dynamics according to
\begin{equation}
    \label{eq:langevin_od}
      \upd \x_t = -\nabla_\x V_\theta(\x_t) \upd t + \sqrt{2 \beta^{-1}} \upd W_t \
\end{equation}
can be used to sample from the Boltzmann distribution. However, on practical simulation time scales, the simulation of either underdamped or overdamped Langevin dynamics does not guarantee an accurate sample from $\pi_\theta$, as the dynamics are susceptible to metastability. As the Boltzmann distribution of molecular systems is typically highly multi-modal, trajectories can remain confined for long times within metastable basins separated by high barriers in the free energy landscape, leading to slow mixing and poor sampling of the full distribution \cite{Henin2022}.

\subsection{Particle-based variational inference}

\textbf{Stein variational gradient descent (SVGD)} is a particle-based variational inference algorithm which approximates a target density $\pi$ on a state space $\mathcal{X}$ with the empirical distribution $\hat{q}_t$ of a set of interacting particles $\bar{X}_t = \{ \x^i_t \}_{i=1}^J$. The evolution of each particle for $i=1,\ldots, J$ is given by the following update equation, for time step $\epsilon > 0$ and symmetric positive semi-definite kernel $k: \mathcal{X} \times \mathcal{X} \to \mathbb{R}$, 
\begin{subequations}      
        \begin{equation}
        \label{eq:steindiscrete}
        \x^i_{t+1} \leftarrow \x^i_{t} + \epsilon \hat{\phi}^*_t(\x^i_t; \bar{X}_t) \ ,
        \end{equation}
        \begin{equation}
        \label{eq:steinmc}
        \hat{\phi}^*_t(\cdot; \bar{X}_t) = \frac{1}{J} \sum_{i=1}^J \Big[ \nabla_{\x^j_t} \log \pi(\x^j_t) k(\x^j_t, \cdot) + \nabla_{\x^j_t} k(\x^j_t, \cdot) \Big], \quad \x^j_t \in \bar{X}_t \ .
        \end{equation}
\end{subequations}
One can show that \eqref{eq:steindiscrete} corresponds to the Euler discretization of a continuous-time ODE describing the particle dynamics, and \eqref{eq:steinmc} corresponds to a Monte Carlo estimator of an expectation resulting from Stein's identity \cite{Stein1972},
\begin{subequations}      
        \begin{equation}
        \label{eq:steinode}
        \upd\x^i_t = \phi^*_t(\x^i_t) \upd t \ ,
        \end{equation}
        \begin{equation}
        \label{eq:steinexpec}
        \phi^*_t(\cdot) = \mathbb{E}_{\x \sim \hat{q}_t} [ \nabla_{\x} \log \pi(\x) k(\x, \cdot) + \nabla_{x} k(\x, \cdot)] \ .
        \end{equation}
\end{subequations}
In the limit of infinite time $t \to \infty$ and infinite particles $J \to \infty$, the empirical distribution $\hat{q}_t$ converges weakly to $\pi$ in Kullback--Leibler (KL) divergence \cite{Liu2017,Lu2019}. \Cref{app:varinf} describes the SVGD formulation in further detail.

\section{Stein kernelized molecular dynamics}
\label{sec:methods}

Active learning is performed with an iterative scheme of (i) sampling configuration space, (ii) selecting configurations to add as training data, and (iii) retraining the MLIP on the augmented training set. In the following, we introduce the SKMD sampling algorithm and its associated adaptive stopping criterion, which is a kernel-based strategy for online data acquisition.

\subsection{Sampling algorithm}
\label{sec:sampling}

Let $\bar{X}_s=\{ \x_s^{j} \}_{j=1}^J$ be an ensemble of $J$ particles at a given time step $s > 0$. Let $k: \mathbb{R}^n \times \mathbb{R}^n \to \mathbb{R}$ be a symmetric positive semi-definite kernel. For stopping time $\ell > 0$, step size $\epsilon > 0$, constant $\eta > 0$, scale parameter $A: \mathbb{R}^n \to \mathbb{R}$, and $\xi^{i}_t \sim N(0,\mathbb{I}_n)$, the evolution of the $i$th particle in the set for $t = s, ..., s+\ell-1$ is given by
    \begin{subequations}    
    \label{eq:skmd}
        \begin{equation}
        \label{eq:skmd_update}
        \upd\x^{i}_{t+1} \leftarrow \x^{i}_{t} + \epsilon \big[- A(\x^{i}_t) \beta \, \nabla_{\x^{i}_t} V_\theta(\x^{i}_t) + F^\textup{SKMD}_{\theta,s}(\x^{i}_t; \bar{X}_s) \big] + \sqrt{\tfrac{2 \epsilon \eta}{J}} \ \xi_t^{i} \ ,
        \end{equation}
        \begin{equation}
        \label{eq:skmd_force}
        F^\textup{SKMD}_{\theta,s}(\cdot; \bar{X}_s) = \frac{1}{J-1} \sum_{j=1}^{J-1} \Big[ - \beta \,
        \nabla_{\x_s^{j}} V_\theta(\x_s^{j}) k(\x_s^{j}, \cdot) + \nabla_{\x_s^{j}} k(\x_s^{j}, \cdot) \Big], \ \x_s^{j} \in \bar{X}_s \setminus \{\x_s^{i}\} .
        \end{equation}
\end{subequations}
After $\ell$ steps, the current point $\x^i_{s+\ell}$ replaces $\x^i_s$ in the ensemble and we switch to evolving the next particle in the ensemble starting at step $s+\ell$. The sampling algorithm, summarized in \Cref{alg:samp}, has the following characteristics:

\textbf{Adaptive biasing force.} The force field in \eqref{eq:skmd_update} is the combination of $- \beta \, \nabla_{\x^{i}_t} V_\theta(\x^{i}_t)$, referred to as the gradient force, and $F^\textup{SKMD}_{\theta,s}(\x^{i}_t; \bar{X}_s)$, referred to as the SKMD biasing force. In the first term in the summand of \eqref{eq:skmd_force}, the gradient force at each particle in the ensemble scaled by the kernel acts as an attractive force that draws the trajectory toward low-energy configurations to promote fidelity to the free energy landscape. The second term, the gradient of the kernel, acts as a repulsive force that drives particles apart in order to promote exploration of the configuration space. The attractive and repulsive forces strike a balance between the \textit{exploration} of novel configurations and \textit{exploitation} of high-probability regions which ensures the accuracy of the asymptotic distribution of samples. 

We introduce a scale parameter $A$ to differentially weigh the strength of the gradient force relative to the SKMD biasing force at the current point in the simulation path. By setting $A(\x) = k(\x,\x) = a$, $\forall \x \in \mathbb{R}^n$, the drift of \eqref{eq:skmd_update} coincides with the velocity field of SVGD at $t=s$. Setting a larger kernel amplitude $a > 0$ can enhance the repulsive effect of the dynamics, but setting $A=a$ would in turn magnify the scale of the negative gradient of the potential, making it more difficult for the path to cross an energy barrier. Therefore, for greater flexibility, we do not strictly require that $A$ equal the kernel self-similarity in the SKMD algorithm. In our implementation, we generally set $A = 1$. 

\textbf{Kernel of global descriptors}. The kernel measures the degree of similarity between two configurations. We assume that configurations are more clearly separated in the Euclidean geometry of the descriptor space $\Omega$ than in Cartesian space, since the descriptor map encodes invariance or equivariance relations. Therefore, we use translation-invariant kernels based on distances between the descriptors; for example, a Gaussian kernel with amplitude $a > 0$ and length scale $b > 0$,
\begin{equation}
    k(\x,\x') = k_g(g(\x), g(\x')) = a \exp \Big(-\frac{||g(\x) - g(\x')||^2}{2b^2} \Big) \ .
\end{equation}
The gradient of the kernel with respect to its first argument is computed via chain rule as
\begin{equation}
    \nabla_\x k(\x,\x') = -\frac{1}{b^2} k(\x,\x') \mathcal{J}(\x)^\top \big( g(\x) - g(\x') \big) \ ,
\end{equation}
where the Jacobian $\mathcal{J} \in \mathbb{R}^{d \times 3N}$ is a block matrix with $\mathcal{J}_{ij}(\x) = \frac{\partial g_i}{\partial \x_j}(\x)$. 

\textbf{Asynchronous particle updates.} Rather than updating the positions of all particles in the ensemble at once, as is typically done in interacting particle systems, one particle is updated at a time for a finite number of steps $\ell$. The asynchronous scheme allows one to use single-particle simulation to approximate the ensemble dynamics of the interacting particle system. This approach both reduces the computational overhead associated with inter-particle communication and simplifies the integration of SKMD into existing molecular dynamics workflows, such as those in LAMMPS \cite{Thompson2022}. We evaluate the effect of asynchronous particle updates on the fidelity of SKMD samples to the Boltzmann distribution in \Cref{app:samp}. 

\textbf{Isotropic stochastic noise.} The addition of stochastic noise in \eqref{eq:skmd_update} serves to improve the mixing of the simulation path for small ensemble sizes ($O(10)$) and relates the update equation to a stochastic variant of SVGD first proposed in \cite{Gallego2018}. In \cite{Gallego2018,Duncan2023}, stochastic SVGD is defined by a kernel-dependent diffusion coefficient. However, we found in numerical experiments that the asynchronous scheme in \eqref{eq:skmd_update} with the isotropic, state-independent diffusion coefficient produces samples which exhibit better agreement with the Boltzmann distribution compared to the implementation with the kernel-dependent diffusion coefficient. A state-independent diffusion coefficient is also adopted in \cite{Ye2020} for a single-trajectory variant of stochastic SVGD. Our choice of diffusion coefficient does not perturb the mean-field limit of the dynamics, as shown in \Cref{app:asymptotic}. 

\textbf{Fidelity to the Boltzmann distribution.} A key property of SKMD is that, in the limit of $\epsilon \to 0$, $\ell \to 0$, and $J \to \infty$, the empirical distribution of the particles approaches the solution to a mean-field equation whose stationary solution is the Boltzmann distribution. We show this result in \Cref{prop:mfl}. Consequently, SKMD inherits the convergence properties of SVGD that are derived from the mean-field equation, and the asymptotic behavior of \eqref{eq:skmd} remains consistent with the Boltzmann distribution $\pi_\theta$. Intuitively, the local attractive and repulsive dynamics in configuration space correspond to global transport of the empirical measure of particles in probability space, such that in the appropriate asymptotic regime, the empirical distribution weakly converges to $\pi_\theta$ in KL divergence. See \Cref{app:asymptotic} for the necessary assumptions and further discussion.

\begin{algorithm}[t]
\caption{SKMD for active learning with online data acquisition}
\label{alg:al_online}
\begin{algorithmic}
    \Require{Initial ensemble $\bar{X}_0 = \{\x^{1}_0, \ldots, \x^{J}_0\}$, 
            training set $\mathcal{D}_0$, model $V_{\theta_0}$, kernel $k$, threshold $\zeta_0$} 
    
    \For{training iteration $\tau = 0,1,\ldots$}
        \For{ensemble member $j = 1,\ldots,J$}
            \State Initialize trajectory at $\x_0 \gets \x^{j}_\tau$ 
            \Repeat{ for $t=0,1,\ldots$}
                \State Simulate one step of \eqref{eq:skmd}: $\x_{t+1} \gets \text{SKMD}(\x_t, \bar{X}_\tau, V_{\theta_\tau}, k)$
            \Until{acquisition criterion \eqref{eq:stopping} is met at $\x_{t+1}$ with threshold $\zeta_0$}
            \State Add $\x_{t+1}$ to the set $\mathcal{D}_{\tau+1}$
            \State Update ensemble: replace $\x^{i}_\tau$ with $\x^{i}_{t+1}$ in $\bar{X}_{\tau}$
        \EndFor
        \State Train new model $V_{\theta_{\tau+1}}$ on the augmented training set $\bigcup_{k=0}^{\tau+1} \mathcal{D}_{k}$
    \EndFor
    \State \Return Final model $V_{\theta_\text{final}}$ and training set $\mathcal{D}_\text{final}$
\end{algorithmic}
\end{algorithm}

\subsection{Data acquisition}
\label{sec:acquisition}

In \Cref{sec:sampling}, we presented SKMD as a general purpose sampling algorithm for a given Boltzmann distribution. In this section, we discuss approaches to adapt the sampling algorithm for active learning, where the goal is to select new training data which improve model quality while minimizing the cost of data acquisition and labeling.

\textbf{Offline data acquisition} is performed by choosing a subset of a pre-collected pool of candidate configurations to label with the oracle and add to the training data. In many such methods, the chosen subset is the one that maximizes some functional measuring the dissimilarity of elements---such as the determinant of a feature kernel, as with MaxVol algorithm \cite{Lysogorskiy2021, Lysogorskiy2023} and determinantal point processes \cite{Derezinski2020, Zou2025}, or the distance between elements, as with farthest point sampling (FPS). These methods may be directly adopted by utilizing samples generated by SKMD as the pool of candidate configurations, as described in \Cref{alg:al_offline}. Offline methods for data acquisition often incur high computational costs, as the selection of subsets from a candidate pool is a combinatorially hard problem. Practical algorithms rely on greedy approximations of optimal subset selection, which requires repeated matrix determinant evaluations whose cost scales with the total set size. 

\textbf{Online data acquisition} is a cost-effective alternative to offline approaches, performed by specifying criteria for collecting training data during simulation. While the sampling algorithm in \eqref{eq:skmd} is defined for a fixed stopping time $\ell > 0$, SKMD can be oriented into a technique for online data acquisition by introducing an adaptive stopping criterion. Let the Euclidean norm of the SKMD biasing force define an acquisition function $\alpha_{s}$. For a given particle that evolves from a time $s \geq 0$, the adaptive stopping time $\ell(s;\zeta_0)$ is the first time after a set number of steps $\ell_0$ that acquisition function falls below a threshold $\zeta_0 > 0$, i.e.,
\begin{equation}
    \label{eq:stopping}
    \ell(s;\zeta_0) \coloneqq \inf \{t \geq \ell_0 : \alpha_{s}(\x_{s+t}) < \zeta_0 \}, \qquad \alpha_{s}(\x) = || F^\textup{SKMD}_{\theta,s}(\x; \bar{X}_s) || \ .
\end{equation}
At the point $\x_{s+\ell}$ where the criterion is met, we collect that point as a new training datum and switch the simulation to the next particle in $\bar{X}_s$. After advancing each particle in the ensemble and collecting $J$ configurations, we label the data with reference calculations and retrain the model on the augmented training set. This approach is outlined in \Cref{alg:al_online}.

The acquisition criterion is a Stein-informed heuristic for when a given point is informative with respect to both the potential energy surface and the existing training set. It promotes the selection of points at key geometric features of the potential energy surface, since the gradient of the potential in the SKMD biasing force is small in energy basins and at saddle points. Moreover, it promotes the selection of points which are distinct from previously added training data, since the kernel and its gradient become small when the point is far from $\bar{X}_s$ in descriptor space. 

\section{Related work}
\label{sec:relatedwork}

\textbf{Enhanced sampling} \cite{Henin2022} encompasses a broad class of methods used to promote some desired sampling behavior in molecular dynamics, reviewed in \Cref{app:enhancedsampling}. The method most closely related to our application is uncertainty-driven dynamics (UDD) \cite{Kulichenko2023},
\begin{equation}
    \label{eq:udd}
      \upd \x_t =  \big( - \nabla_\x V_\theta(\x_t) - \nabla_\x V^{\textup{UDD}}_t(\x_t) \big) \upd t + \sqrt{2\beta^{-1}} \upd W_t \ ,
\end{equation}
a query-by-committee approach to active learning which introduces a biasing potential based on a Gaussian kernel of the empirical variance in energy predictions by a committee of MLIPs. The committee is composed of the MLIP with different parameters learned from $k$-fold splits of the training data. Hyperactive learning \cite{vanderOord2022} employs a sampler of the same form, but with the committee parameters drawn from a Bayesian posterior. Online data acquisition is performed by querying points at which the committee uncertainty exceeds a certain threshold. Both methods are designed to preferentially sample regions of high uncertainty, but unlike SKMD, they do not maintain the Boltzmann distribution as the invariant distribution and do not incorporate information on the free energy landscape in the acquisition criteria. Moreover, because UDD requires the training and simulation of multiple MLIPs, each active learning iteration with UDD incurs a higher computational cost than with SKMD, as reflected by their run times and memory usage detailed in \Cref{app:compute}. 

\textbf{Kernel herding} constructs a set of representative points whose empirical distribution closely matches a target distribution $\pi$ in terms of their kernel mean embeddings. The SKMD acquisition criterion can be compared to kernel herding via Stein points \cite{Chen2018}, where points are selected greedily according to 
\begin{equation}
    \label{eq:steinpts}
    \x_n \in \underset{
    \x \in \mathcal{X}}{\text{argmin}} \sum_{i=1}^{n-1} \mathcal{S}^{\x_i}_{\pi} \mathcal{S}^{\x}_{\pi} \otimes k(\x_i, \x)
\end{equation}
given previously chosen points $\{ \x_1,...,\x_{n-1}\}$ and Stein operator $\mathcal{S}^\x_\pi$. In our acquisition criterion, the previously chosen points are training data $\bar{X} \subseteq \mathcal{D}$, the target distribution is the Boltzmann distribution $\pi_\theta$, and rather than searching for a minimum over the full configuration space $\mathcal{X}$, we select points along the simulation path with low values of $\sum_{i=1}^{J-1} \mathcal{S}^{\cdot}_{\pi_\theta} \otimes k(\x_i, \cdot)$ in terms of its Euclidean norm. Therefore, the SKMD acquisition scheme can be interpreted as an approach to select points which reduce a simpler proxy objective to the one in Stein points. We discuss this relation in \Cref{app:herding}.

\textbf{SVGD} has spawned a plethora of variants of Stein repulsive dynamics, for applications in Bayesian inference \cite{Gallego2018,Detommaso2018,Wang2019,Li2020,Ye2020,Alsup2021}, data assimilation \cite{Stordal2021}, and POMDPs \cite{Zhang2025}. To our knowledge, our work is the first to derive an explicit active learning framework from principles of SVGD.
\section{Experiments}
\label{sec:experiments}

\subsection{Neural network potential}
\label{sec:nn}

\textbf{System.} We evaluate the performance of SKMD for active learning (AL) of a neural network potential. As the reference, we use the M\"uller--Brown potential, a common 2D benchmark for studying problems in molecular dynamics. The neural network potential is a multi-layer perceptron defined by a single 20-dimensional hidden layer, tanh activation functions, and a confining quartic term to ensure that the system has a well-defined Boltzmann distribution. We use autodifferentiation to compute the negative gradient of the neural network potential. 

\textbf{Baselines.} We compare against two active learning baselines: in the first, we simulate overdamped Langevin dynamics at a set temperature for a fixed number of time steps and select data along the simulation path at periodic time intervals. In the second, we implement the UDD sampler \cite{Kulichenko2023} with a committee of 5 models which are trained on $k$-fold splits of the training data and query data using the UDD uncertainty-based criterion. For all schemes, we utilize the same initial model and dataset, training loss of the form in \eqref{eq:loss}, and number of additional training data in each active learning iteration. For fair comparison, we fix the temperature and the realization of Brownian motion in each active learning trial performed with each scheme, such that the only variation in their performance is a result of the choice of sampler and data acquisition criteria.

\textbf{Active learning protocol.} We define an initial training set of 32 points concentrated in a metastable region of state space, to emulate conditions where the starting configurations are highly correlated. The neural network potential trained on the initial set, shown in Figure \ref{fig:al_grid} at iteration 0, primarily characterizes the region of state space informed by the data. 

We implement two versions of our method: one in which new data are selected along the simulation path at period time intervals (``SKMD'') and one in which new data are selected with the acquisition criterion \eqref{eq:stopping} (``a-SKMD''). Further details on the experimental setup and model parameters are provided in \Cref{app:exp_nn}.

\textbf{Results.} At the given temperature, the samples from Langevin dynamics remain confined to the energy basin and the model is not able to resolve the potential in regions which fall outside the coverage of the training data, as shown in Figure \ref{fig:al_grid}. In UDD, there are several occurrences of the queried data clustering in close proximity to each other. This behavior is due to the fact that the uncertainty metric defining the UDD stopping criterion may be large only in concentrated regions and the path has a tendency to revisit regions where the uncertainty metric is high, as shown in \Cref{app:adaptstop}. a-SKMD leads to faster mixing, such that the model quickly resolves the other two energy basins which are unidentified in the initial model. Moreover, the data points queried at each training iteration exhibit good spread in the state space, indicating that the SKMD stopping criterion is effective in promoting dissimilarity among the queried points. 

    \begin{figure}
    \centering
    \includegraphics[width=\textwidth]{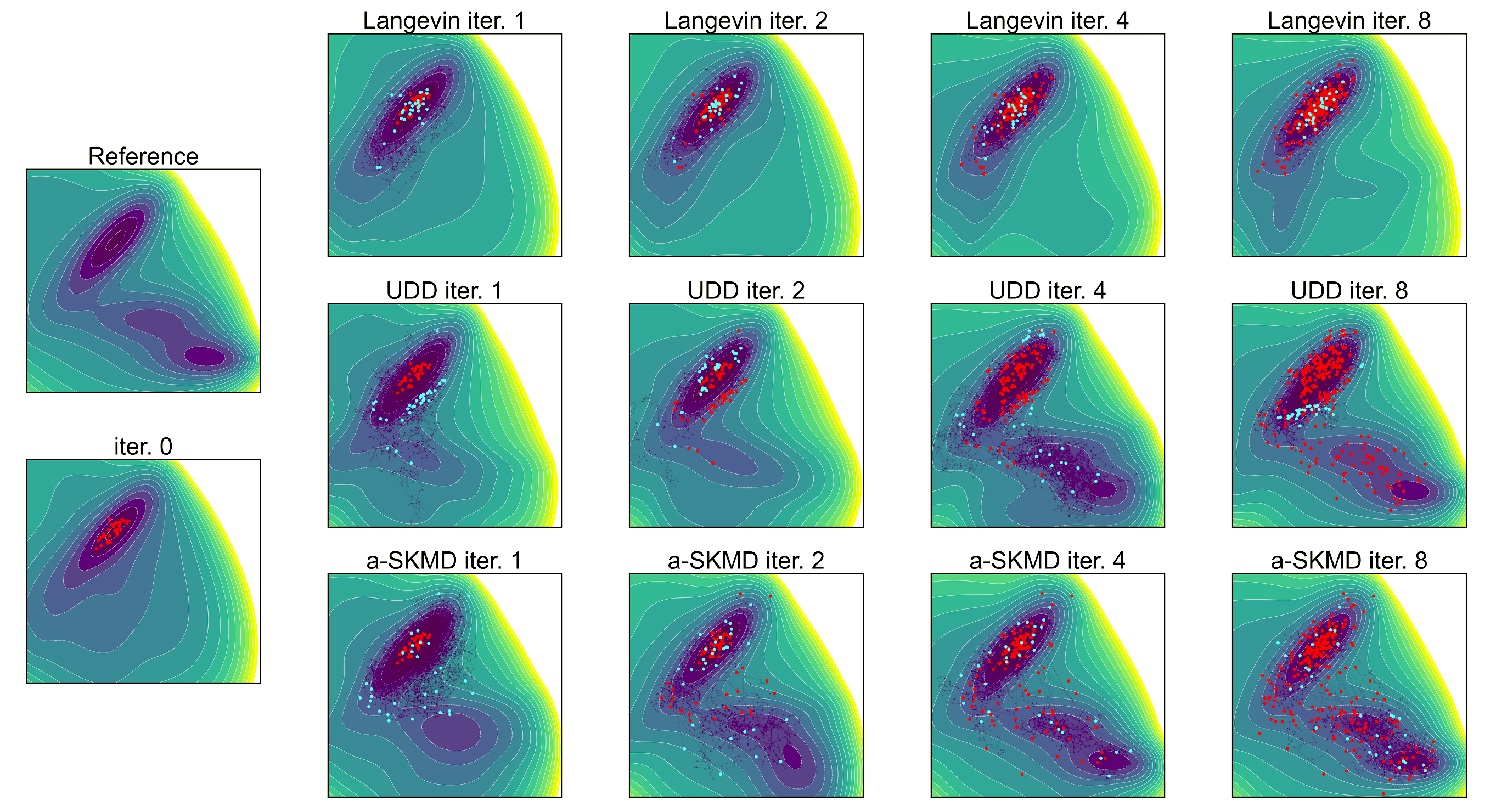}
    \caption{\small{Contours of the neural network potential at iterations $\{ 1,2,4,8 \}$ of active learning by overdamped Langevin dynamics (top row), UDD (middle row), and a-SKMD (bottom row). The accumulated training data are shown in red, the queried data at the current iteration in cyan, and the path from the previous stopping time to the current stopping time in dark blue. The reference M\"uller--Brown potential and the initial model are to the left.}}
    \label{fig:al_grid}
    \end{figure}

\Cref{fig:al_stats} shows the statistics of the model error computed over 50 active learning trials. Model error is evaluated in terms of the root mean square error (RMSE) in the potential energy and forces from a test set of samples distributed according to the Boltzmann distribution of the M\"uller--Brown potential, $\pi$. Between the two schemes with na\"ive data acquisition, SKMD leads to lower error compared to the Langevin scheme. Between the two schemes with adaptive acquisition criteria, a-SKMD more rapidly decreases error compared to UDD. a-SKMD shows significant performance gains over the other methods, converging to a lower value of error with smaller variance in the fewest iterations.

    \begin{figure}[t]
    \centering
    \includegraphics[width=\textwidth]{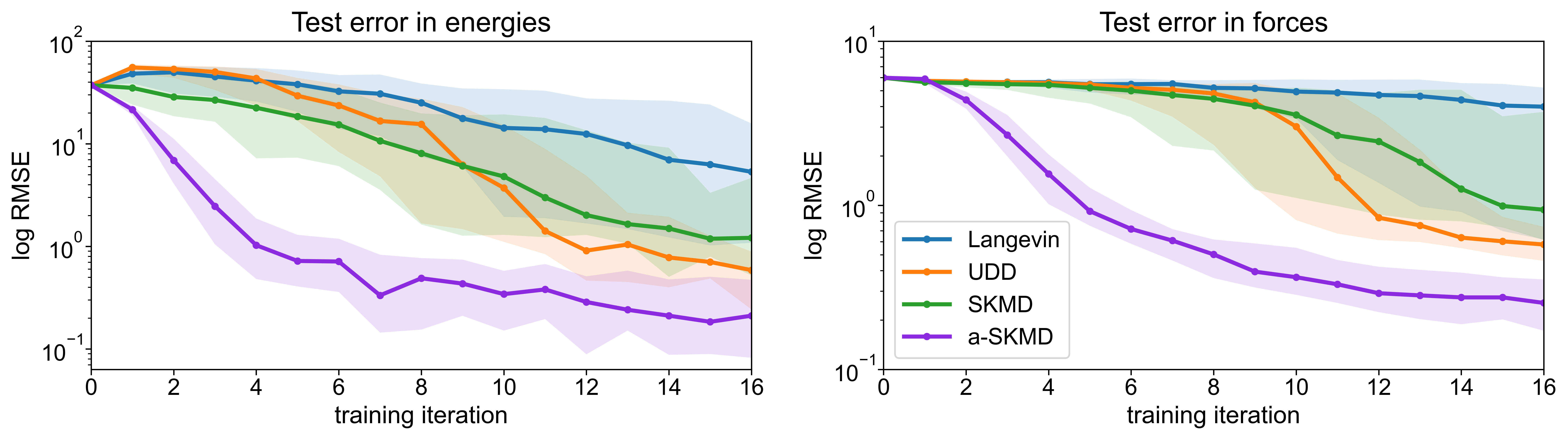}
    \caption{\small{Results from active learning with the M\"uller--Brown potential as the reference. Root mean square error (RMSE) in potential energy (left) and forces (right) of the neural network potential across training iterations for the Langevin (blue), UDD (orange), SKMD (green), and a-SKMD (purple) schemes. The solid lines show the median error and the shaded regions show the 25th to 75th percentile range of the error across 50 trials.}}
    \label{fig:al_stats}
    \end{figure}

\subsection{MACE fine-tuning}
\label{sec:mace}

\textbf{System.} We next apply SKMD to perform fine-tuning of a MACE potential on the alanine dipeptide molecule. Alanine dipeptide is commonly used as an example in studies of active learning and enhanced sampling \cite{Schwalbe2021, Zaverkin2024} since its Ramachandran ($\psi, \phi$) conformational energy landscape contains multiple well-separated minima. These minimum energy conformations and their positions on the landscape are shown in panels (a) and (b) of Figure~\ref{fig:mace_fig_1}. 

\textbf{MACE models.} We start from two MACE models: a larger oracle model (which we treated as the ground truth), and a smaller surrogate model which serves as a starting point for fine-tuning. For the oracle, we use the MACE-OFF-23-small foundation model \cite{Kovacs2025}, which is fit to the SPICE dataset \cite{Eastman2023}. For the surrogate starting point, we fit a MACE model with 32 invariant channels to a randomly selected 0.1\% of the MACE-OFF-23-small training dataset.

\textbf{Baselines.} We run three independent samplers: underdamped MD at 300~K, underdamped MD at 700~K, and SKMD. Both unbiased underdamped simulations have a timestep of 1~fs and use Langevin thermostats with damping parameters of 0.1 ps. SKMD uses overdamped (Brownian) dynamics with a temperature of 50~K, a timestep of 0.1~fs and an isotropic translational viscous damping coefficient of 75.0~$\mathrm{g\,mol^{-1}\,ps^{-1}}$. For SKMD, the global descriptor is defined as the sum of local atomic descriptors. Each atomic descriptor is the concatenated invariant representations from the first and second layers of the MACE model, giving a descriptor dimension of $d=\mathrm{64}$.

\textbf{Active learning protocol.}  The starting configuration is obtained by relaxing Structure 1 from Figure~\ref{fig:mace_fig_1} with the surrogate potential, followed by a 4000-step burn-in simulation. Particle positions for the first active learning iteration were generated by running 6000 further simulation steps, with snapshots captured every 200 steps. For both the unbiased and SKMD runs, the core sampling algorithm was then the same: each particle was propagated 500 steps in a round robin fashion, and the trajectory endpoints were taken as sampling candidates. This process was repeated 300 times, yielding 300 candidates. A greedy furthest-point algorithm was used to select 100 diverse samples from this candidate set using distances in the full descriptor space, which were then labeled by the oracle. All previous samples were taken into account in the furthest-point sampling step. At each iteration, newly sampled data are added to the existing fine-tuning dataset. Further details on the experimental setup are in \Cref{app:exp_mace}.

\begin{figure}[h!]
    \centering
    \includegraphics[width=\linewidth]{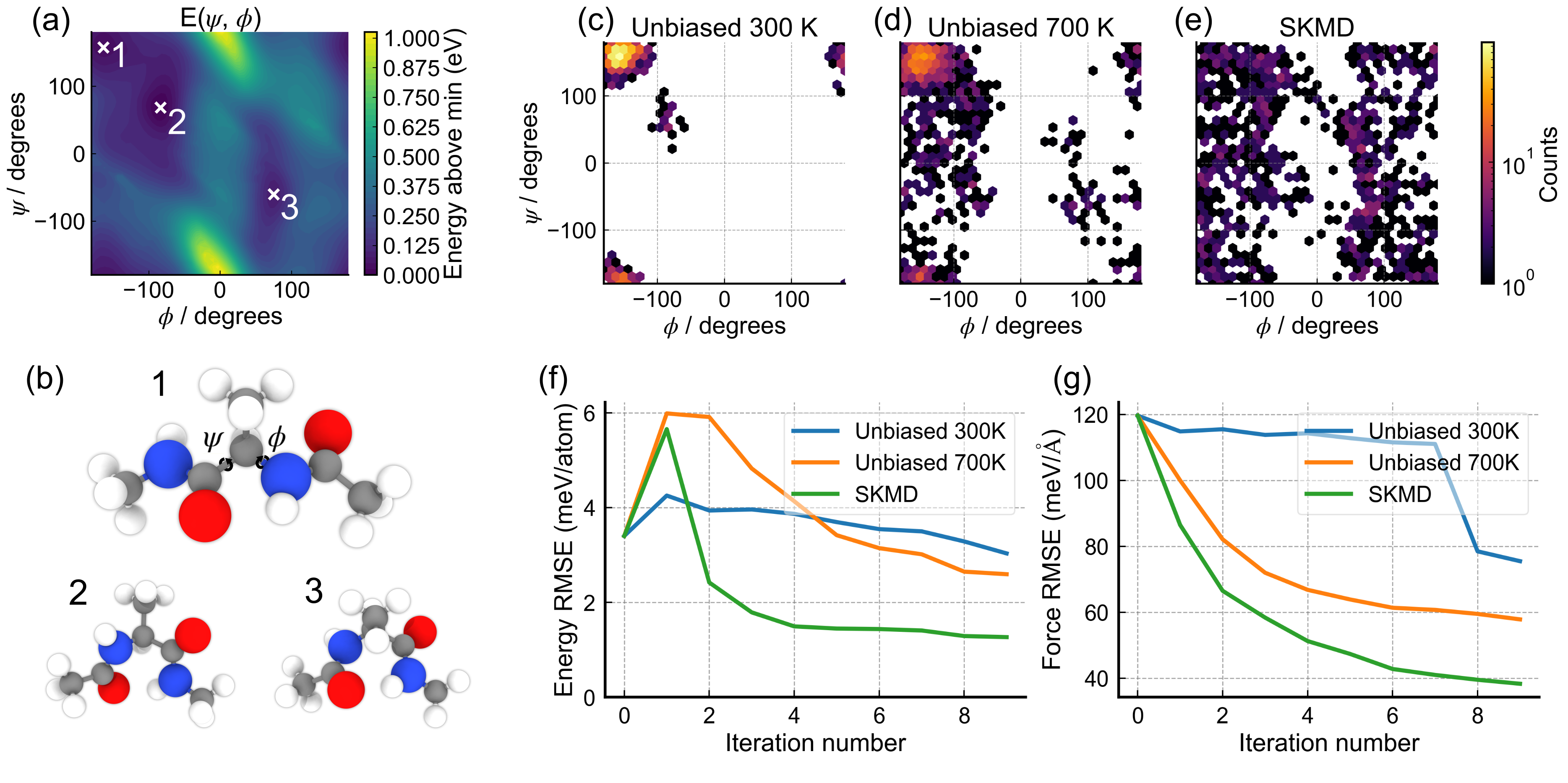}
    \caption{\small{Results from alanine dipeptide enhanced sampling and fine-tuning. (a) A contour map of the $E(\psi, \phi)$ Ramachandran surface of alanine dipeptide using the MACE-OFF-23-small foundation model. (b) Three minimum energy configurations of alanine dipeptide numbered 1 to 3. The Ramachandran angles ($\psi, \phi$) are labeled on 1. (c)--(e) Heat maps of all 1000 samples taken during the 10 iterations of active learning with both unbiased MD at (c) 300~K \& (d) 700~K, and (e) SKMD. The SKMD samples exhibit the broadest surface coverage. (f) - (g) Curves showing the change in (f) energy RMSE and (g) force RMSE on a diverse held-back test set over 10 iterations of the active learning process. Results for unbiased MD simulations at 300~K \& 700~K are shown as blue \& orange curves respectively. SKMD results are shown in green. The test set RMSE falls more rapidly and remains lower for SKMD than for the unbiased MD simulations.}}
    \label{fig:mace_fig_1}
\end{figure}



\textbf{Results.} Panels (c)--(e) of Figure~\ref{fig:mace_fig_1} show the coverage of samples across the Ramachandran $\psi, \phi$ surface for all 1000 samples taken over 10 active learning iterations from each run. The 700~K unbiased MD run provides broader coverage than the 300~K run, but SKMD covers a substantially larger region of the $\psi, \phi$ surface than either unbiased simulation. Panels (f) and (g) of Figure~\ref{fig:mace_fig_1} show the energy and force RMSEs of the fine-tuned model after each active learning iteration on a held-back test set consisting of 300 diverse samples generated through running SKMD with the oracle model. Over the course of active learning, the RMSEs decrease in all cases, with the fastest and largest reductions obtained when SKMD is used to generate samples.

\section{Conclusions}
\label{sec:conclusions}

We introduce a novel framework to improve the efficiency of active learning and fine-tuning of MLIPs. Our method, SKMD, is defined by an adaptive biasing force that balances repulsive dynamics, which promote the exploration of new configurations, with attractive dynamics, which promote fidelity to the Boltzmann distribution.
We evaluate SKMD in two numerical experiments: in a 2D benchmark, SKMD outperforms overdamped Langevin dynamics and UDD in generating training data that improve model quality in the fewest active learning iterations. For fine-tuning a MACE foundation model of organic molecules for alanine dipeptide, SKMD produces models with lower energy and force error compared to those trained on data from standard and tempered versions of underdamped Langevin dynamics. 

SKMD may be used as a general purpose algorithm for sampling configuration space as well as for online data acquisition. A limitation of the adaptive stopping criterion is that it requires that SKMD be defined by a kernel with a fixed bandwidth. When SKMD is defined by a variable kernel bandwidth (e.g., the median distance between particles), the norm of the SKMD biasing force defining the acquisition function does not decrease significantly over simulation time steps. One can implement active learning in a two-phase scheme: first, an exploration phase which utilizes SKMD with a variable kernel bandwidth and fixed stopping to promote faster mixing in high-dimensional configuration space, followed by an exploitation phase in which the interacting particles sample energy basins utilizing SKMD with a fixed kernel bandwidth and adaptive stopping. The efficacy of this two-phase scheme for more complex chemical systems will be studied in future work.

\begin{ack}
The work of JZ and YM was supported by the United States Department of Energy, National Nuclear Security Administration under Award Number DE-NA0003965. The work of FB was supported by a studentship from the UK Engineering and Physical Sciences Research Council–funded Centre for Doctoral Training in Modelling of Heterogeneous Systems (Grant No. EP/S022848/1). JZ and FB gratefully acknowledge the support of a Research Workshop Follow-on Grant from the International Centre for Mathematical Sciences, Edinburgh. FB further acknowledges the University of Warwick Scientific Computing Research Technology Platform for computational support, as well as the use of resources provided by the Isambard-AI National AI Research Resource (AIRR). Isambard-AI is operated by the University of Bristol and is funded by the UK Government's Department for Science, Innovation and Technology (DSIT) via UK Research and Innovation; and the Science and Technology Facilities Council [ST/AIRR/I-A-I/1023] \cite{isambardai}.
\end{ack}

\printbibliography


\newpage 
\appendix

\section{Assumptions, Analysis, and Proofs}
\label{app:analysis}

\subsection{Asymptotic analysis of SKMD}
\label{app:asymptotic}

We show that the asymptotic behavior of \eqref{eq:skmd} maintains fidelity to the Boltzmann distribution $\pi_\theta$, such that it can be implemented for Boltzmann sampling. First, we state the necessary assumptions, following the analysis in \cite{Duncan2023}. 

\begin{condition}
    \label{cond:kernel}
    The kernel $k: \mathcal{X} \times \mathcal{X} \to \mathbb{R}$ is continuous and symmetric positive semi-definite, i.e., it satisfies $\sum_{i,j=1}^J \alpha_i \alpha_j k(\x^{i}, \x^{j}) \geq 0$ for all $J \in \mathbb{N}$, $\alpha_1,...,\alpha_J \in \mathbb{R}$, and $\x^1, ..., \x^{J} \in \mathcal{X}$. 
\end{condition}

\begin{condition}
    \label{cond:target}
   The probability measure associated with the Boltzmann distribution $\pi_\theta$ belongs to $\mathcal{P}_k(\mathcal{X})$, a subset of $\mathcal{P}(\mathcal{X})$ defined as
    \begin{equation}
        \begin{aligned}
        \mathcal{P}_k(\mathcal{X}) \coloneqq \Big\{ \rho \in \mathcal{P}(\mathcal{X}) & : \rho \textup{ admits a smooth Lebesgue density}, \textup{supp } \rho = \mathcal{X}, \\
        & \int_{\mathcal{X}}k(\x,\x) \upd \rho(\x) < \infty \Big\} \ .
        \end{aligned}
    \end{equation}
\end{condition}

In the limit of $\epsilon \to 0$ and $\ell \to 0$, the SKMD update equation in \eqref{eq:skmd} converges to the stochastic differential equation
\begin{equation}
    \label{eq:skmd_cts}
    \upd\x^{i}_t = \frac{1}{J} \sum_{j=1}^J \Big[ -k(\x^{j}_t, \x^{i}_t) \beta \, \nabla_{\x^{j}_t} V_\theta(\x^{j}_t) + \nabla_{\x^{j}_t} k(\x^{j}_t, \x^{i}_t) \Big] + \sqrt{\tfrac{2 \eta}{J}} \upd W_t^{i}, \quad \x^{i}_t,\x^{j}_t \in \bar{X}_t \ ,
\end{equation}
for $A(\x) = k(\x, \x)$, where $W^{i}_t$, $i=1,...,J,$ are independent copies of $n$-dimensional standard Brownian motion. In \Cref{prop:mfl}, we establish that in the limit of $J \to \infty$, \eqref{eq:skmd_cts} has a mean-field equation that is identical in form to that of SVGD. The proof is provided in \Cref{app:proof}. 

\begin{proposition}[Mean field limit of SKMD]
\label{prop:mfl}
Assume \Cref{cond:kernel,cond:target} hold and that $A(\x) = k(\x,\x)$ for all $\x \in \mathcal{X}$. As $J \to \infty$, the empirical measure $\hat{q}^J_t(\x) \coloneqq \frac{1}{J} \sum_{j=1}^J \delta(\x - \x_t^{j})$ of particles evolving according to \eqref{eq:skmd_cts} converges weakly to $q_t$, the solution to
\begin{equation}
    \label{eq:mfl}
    \frac{\partial}{\partial t} q_t(\x) = \nabla \cdot \Bigg( q_t(\x) \int_{\mathcal{X}} k(\x,\x') \big( \nabla_{\x'} q_t(\x') + q_t(\x') \beta \, \nabla_{\x'} V_\theta(\x') \big) \upd \x' \Bigg) \ .
\end{equation} 
\end{proposition}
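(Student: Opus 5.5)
Since the noise amplitude in \eqref{eq:skmd_cts} is $\sqrt{2\eta/J}$, the plan is to treat the noise as a vanishing perturbation of the SVGD particle system and then use the known mean-field limit of SVGD. The approach is a standard weak formulation via Itô's formula applied to test functions of the empirical measure.

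\textbf{Step 1 (weak evolution of the empirical measure).} Fix a test function $\varphi \in C_c^\infty(\mathcal{X})$ and set $\la \hat{q}^J_t, \varphi\ra = \frac1J\sum_i \varphi(\x^i_t)$. Write the drift as $\hat\phi_t(\x) = \int [-\beta k(\x',\x)\nabla V_\theta(\x') + \nabla_{\x'}k(\x',\x)]\,\upd\hat q^J_t(\x')$. Under $A(\x)=k(\x,\x)$, the $j=i$ self-interaction term is exactly included, so the drift is the full-ensemble empirical average in \eqref{eq:skmd_cts}. Itô's formula then gives
\begin{equation*}
\upd \la \hat q^J_t,\varphi\ra = \la \hat q^J_t, \nabla\varphi\cdot \hat\phi_t\ra\,\upd t + \frac{\eta}{J}\la \hat q^J_t,\Delta\varphi\ra\,\upd t + \frac{1}{J}\sum_{i=1}^J \sqrt{\tfrac{2\eta}{J}}\,\nabla\varphi(\x^i_t)\cdot \upd W^i_t .
\end{equation*}

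\textbf{Step 2 (vanishing noise contributions).} The Laplacian term is $O(\eta\|\Delta\varphi\|_\infty/J)$. The martingale term has quadratic variation $\frac{2\eta}{J^3}\sum_i|\nabla\varphi|^2\,t \le 2\eta t\|\nabla\varphi\|_\infty^2/J^2$. By Doob's inequality, both terms vanish uniformly on $[0,T]$ in $L^2$ as $J\to\infty$. Hence any limit point satisfies the deterministic weak equation $\partial_t\la q_t,\varphi\ra = \la q_t,\nabla\varphi\cdot\phi_{q_t}\ra$, where $\phi_q(\x)=\int[-\beta k(\x',\x)\nabla V_\theta(\x')+\nabla_{\x'}k(\x',\x)]\,q(\x')\upd\x'$.

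\textbf{Step 3 (identify the limit).} For $q$ with a smooth density, integrate by parts in $\x'$, using the symmetry of $k$ and the decay implied by \Cref{cond:target}: $\int\nabla_{\x'}k(\x',\x)q(\x')\upd\x' = -\int k(\x,\x')\nabla q(\x')\upd\x'$. Therefore $\phi_q(\x) = -\int k(\x,\x')(\nabla q+q\beta\nabla V_\theta)(\x')\upd\x'$, and the weak equation is exactly the weak form of \eqref{eq:mfl}. This is the SVGD mean-field equation of \cite{Lu2019,Duncan2023}.

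\textbf{Step 4 (tightness and uniqueness; main obstacle).} Weak convergence of the whole sequence, not just of subsequences, requires tightness of $\{\hat q^J\}$ in $C([0,T];\mathcal{P}(\mathcal{X}))$ and uniqueness for \eqref{eq:mfl}. This is the hard part, because it needs regularity beyond \Cref{cond:kernel,cond:target}: for instance, $k$ with bounded, Lipschitz derivatives and $\nabla V_\theta$ Lipschitz, as in \cite{Lu2019}. I would first try a coupling argument. Couple each particle with an SVGD particle driven by the same initial data (noise $\sqrt{2\eta/J}\,\upd W$ versus none) and apply a Grönwall estimate. The Lipschitz bound on the mean-field vector field in Wasserstein-1 gives $\mathbb{E}\sup_{t\le T} W_1(\hat q^J_t,\hat q^{J,\mathrm{SVGD}}_t)\le C_T\sqrt{\eta/J}\to0$. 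The known SVGD mean-field convergence of \cite{Lu2019} then transfers the limit $q_t$ directly, which bypasses a separate tightness proof.
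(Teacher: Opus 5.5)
Your Steps 1--3 are the paper's proof. Both use It\^o's formula for $\la \hat q^J_t,\varphi\ra$, an $O(\eta/J)$ It\^o correction, a vanishing martingale, and integration by parts in $\x'$ to reach \eqref{eq:mfl}. Your quadratic variation, with only diagonal terms and of order $J^{-2}$, is the correct one; the paper's double sum over $i,j$ gives $O(J^{-1})$, which also suffices.

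The real difference is Step 4. The paper simply assumes $\{\hat q^J_\cdot\}$ has a limit point and passes to the limit ``formally''. It never addresses tightness, continuity of the quadratic term $\mu\mapsto\la\mu,\nabla\varphi\cdot\phi_\mu\ra$, or uniqueness for \eqref{eq:mfl}, so it only characterizes limit points. Your synchronous coupling with noiseless SVGD particles, followed by the SVGD mean-field theorem of \cite{Lu2019}, closes all of this at once. It gives convergence of the whole sequence, uniqueness from the well-posedness theory there, and an explicit $O(\sqrt{\eta/J})$ bound on the SKMD--SVGD discrepancy. It even makes Steps 1--2 unnecessary, since the limit equation is inherited from SVGD.

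The price has three parts:
\begin{itemize}
\item You need hypotheses well beyond \Cref{cond:kernel,cond:target}, which suffice for neither route.
\item You need convergence of $\hat q^J_0$ to $q_0$.
\item The Gr\"onwall step needs care when $\nabla V_\theta$ is unbounded, e.g.\ with the confining quartic term used in the experiments. Then $\x'\mapsto k(\x',\x)\nabla V_\theta(\x')$ is not Lipschitz uniformly in $\x$, because its derivative contains $\nabla_{\x'}k(\x',\x)\otimes\nabla V_\theta(\x')$. Hence $\phi_\mu$ is not globally $W_1$-Lipschitz in $\mu$, and you need moment bounds with an $L^2$/$W_2$-type estimate rather than the plain $W_1$ Gr\"onwall you state.
\end{itemize}

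Two side remarks are inaccurate but harmless.
\begin{itemize}
\item Setting $A(\x)=k(\x,\x)$ in \eqref{eq:skmd} does not yield the $\frac1J\sum_{j=1}^J$ drift of \eqref{eq:skmd_cts}. The self-attraction term enters with weight $1$ rather than $1/J$, and the other terms with $1/(J-1)$. This is immaterial only because the proposition is posed directly for \eqref{eq:skmd_cts}.
\item \Cref{cond:target} concerns $\pi_\theta$, not $q_t$, so it supplies no decay or smoothness for your integration by parts. Equation \eqref{eq:mfl} should be read distributionally, with the derivative on $k$, which is exactly the form your Step 2 produces.
\end{itemize}
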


Therefore, the mean field limit of \eqref{eq:skmd_cts} has a stationary solution at $\pi_\theta \propto \exp(-\beta V_\theta)$. Note that \eqref{eq:mfl} does not depend on the scaling $\eta$, meaning that the diffusion coefficient can be of arbitrary magnitude as long as it vanishes as $J \to \infty$. By way of the mean-field limit, SKMD inherits the convergence properties of SVGD; in particular, one can establish conditions which guarantee that $q_t$ converges weakly to $\pi_\theta$ as $t \to \infty$ \cite[Thm. 2.8]{Lu2019}. 

\subsection{Proof of \Cref{prop:mfl}}
\label{app:proof}

\begin{proof}
    The proof closely follows that of Proposition 2 in \cite{Duncan2023}, with a modification to the diffusion coefficient of the SDE. In the following, let
    \[
    b(\x,q) \coloneqq \int_{\mathcal{X}} \Big[ -k(\x, \x') \beta  \nabla_{\x'} V(\x') + \nabla_{\x'}k(\x,\x') \Big] \upd q(\x')
    \]
    and
    \[
    \sigma(\x) \coloneqq \sqrt{\frac{2\eta}{J}}, \quad \Sigma(\x) \coloneqq \sigma(\x)\sigma(\x)^\top = \frac{2 \eta}{J} \ .
    \]
    We use the duality principle with test functions to prove convergence of $\hat{q}^J_t$ to $q_t$, where it suffices to show that for a smooth test function with compact support $\phi \in C^\infty_c([0,\infty) \times \mathcal{X})$, $\int_{\mathcal{X}} \phi(t,\x) \upd \hat{q}^J_t(\x) \to \int_{\mathcal{X}} \phi(t,\x) \upd q_t(\x)$ as $J \to \infty$. 

    Let $\bar{\x}_t = \{\x^j_t\}_{j=1}^J$ denote the set of all particles whose empirical distribution correspon\upd s to $\hat{q}^J_t$ and $\Phi(t, \bar{\x}_t) \coloneqq \frac{1}{J} \sum_{j=1}^J \phi(t, \x_t^i) = \la \phi(t, \cdot), \hat{q}^J_t \ra$ denote the aggregate test function, which averages the test function evaluated at all the particles. Here, the brackets denote the duality pairing between test functions and measures. By It\^o's formula, we have that 
    \begin{align}
    \label{eq:itoformula}
    \frac{\upd}{\upd t}\Phi(t, \bar{\x}_t) &= \frac{1}{J}\sum_{j=1}^J \left[\partial_t\phi(t, \x^j_t) 
    + \nabla \phi(t, \x^j_t) \cdot b(\x^j_t, \hat{q}^J_t)\right] \upd t \notag \\
    & \quad + \frac{\eta}{J}\operatorname{Tr}\left(\operatorname{Hess}\Phi(t,\bar{\x}_t)\right)\upd t + \upd \mathcal{M}_t, 
\end{align}
where $\upd \mathcal{M}_t \coloneqq \frac{\sqrt{2\eta}}{J \sqrt{J}} \sum_{j=1}^J \nabla \phi(\x^j_t) \upd W^j_t$ is a local martingale. The Hessian $\operatorname{Hess}\Phi \in \mathbb{R}^{Jn\times Jn}$ is block diagonal with blocks
    \[
    [\operatorname{Hess}\Phi(\bar{\x})]_{ij} = \begin{cases}
        \dfrac{1}{J}\operatorname{Hess} \phi(\x^i) & i = j, \\[6pt]
        0 & i \neq j.
    \end{cases}
    \]
Therefore, the It\^{o} correction evaluates to
    \[
    \frac{\eta}{J}\operatorname{Tr}\left(\operatorname{Hess}\Phi(t,\bar{\x}_t)\right)
    = \frac{\eta}{J^2}\sum_{j=1}^J \Delta\phi(\x^j_t) = \frac{\eta}{J} \int_{\mathcal{X}}\Delta\phi(\x)\, \upd \hat{q}^J_t(\x).
    \]
Integrating \eqref{eq:itoformula} from $0$ to $t$, we obtain
    \begin{align}
        \langle\phi(t,\cdot),\hat{q}^J_t\rangle - \langle\phi(0,\cdot),\hat{q}^J_0\rangle
        &= \int_0^t\langle\partial_s\phi(s,\cdot),\hat{q}^J_s\rangle\,\upd s \notag \\
        &\quad + \int_0^t\langle\nabla\phi(s,\cdot)\cdot b(\cdot,\hat{q}^J_s),
        \hat{q}^J_s\rangle\,\upd s \notag \\
        &\quad + \frac{\eta}{J}\int_0^t \la
        \Delta\phi(\cdot),\hat{q}^J_s \ra \upd s + \mathcal{M}_t.
    \end{align}
The quadratic variation of the local martingale is
    \begin{align}
        [\mathcal{M}_\cdot, \mathcal{M}_\cdot]_t & = \frac{2\eta}{J^3} \sum_{i,j=1}^J
        \int_0^t \nabla \phi(\x^i_s) \cdot \nabla \phi(\x^j_s) \, \upd s \notag \\ 
        & = \frac{2\eta}{J}\int_0^t\int_{\mathcal{X}} \nabla\phi(\x) \cdot \nabla\phi(\x')\,d\hat{q}^J_s(\x) d\hat{q}^J_s(\x') \,\upd s, \notag
    \end{align}
which is $O(J^{-1})$. Therefore, $\mathcal{M}_t \to 0$ in probability as $J \to \infty$.

Assume that the family $\{\hat{q}^J_\cdot : J \in \mathbb{N}\}$ has a limit point $q_\cdot \in \mathcal{P}(C[0,T])$. Formally, as $J \to \infty$, the It\^{o} correction and quadratic variation of the local martingale both vanish,
yielding
\begin{equation}
    \langle\phi(t,\cdot),q_t\rangle - \langle\phi(0,\cdot),q_0\rangle
    = \int_0^t\langle\partial_s\phi(s, \cdot),q_s\rangle\,\upd s
    + \int_0^t\langle\nabla\phi(s, \cdot)\cdot b(\cdot,q_s),q_s\rangle\,\upd s ,
\end{equation}
which is the weak formulation of the nonlinear transport equation
\begin{equation}
    \partial_tq_t(t, \x) = -\nabla\cdot(b(\cdot,q_t) \, q_t).
\end{equation}
Since $\nabla_{\x'} k(\x,\x') = -\nabla_\x k(\x,\x')$ by symmetry of the kernel and $\nabla q_t = -\beta\nabla V_\theta \cdot q_t
+ \nabla q_t$ from integration by parts, we recover \eqref{eq:mfl} after substituting the definition of $b$.
\end{proof}

One can easily verify that the stationary condition $\partial_t q_t = 0$ is satisfied by
$q_t = \pi_\theta \propto \exp(-\beta V_\theta)$, since
    \begin{align}
    \beta\nabla V_\theta(\x')\pi_\theta(\x') + \nabla\pi_\theta(\x') 
    & = \pi_\theta(\x')\left[\beta\nabla V_\theta(\x') + \nabla\log\pi_\theta(\x')\right] \notag \\
    & = \pi_\theta(\x')\left[\beta\nabla V_\theta(\x') - \beta\nabla V_\theta(\x')\right] = 0. \notag
    \end{align}

\section{Algorithm Details and Variants}
\label{app:algorithm}

\subsection{SKMD for Boltzmann sampling}
\label{app:samp}

SKMD can be used as a general purpose algorithm for sampling the invariant distribution of an energy-based system. \Cref{alg:samp} corresponds to a asynchronous implementation of stochastic SVGD, where each particle is evolved one at a time for a fixed number of steps $\ell$, such that the interacting particle dynamics may be approximated by a single chain. As shown in \Cref{app:analysis}, the distribution of the dynamics defined by a potential $V$ in the continuous time limit ($\epsilon \to 0$), synchronized limit ($\ell \to 0$), and mean-field limit ($J \to \infty$) approaches the invariant distribution $\pi \propto \exp(-V)$, assuming $\beta=1$ without loss of generality. 

\begin{algorithm}
\caption{SKMD for Boltzmann sampling}
\label{alg:samp}
\begin{algorithmic}
    \Require{Initial ensemble $\bar{X}_0 = \{\x^{1}_0, \ldots, \x^{J}_0\}$, trajectory $X = \emptyset$, potential $V$, kernel $k$, stopping $\ell$} 
    
    \State Initialize ensemble index $s \gets 0$, active particle index $i \gets 1$
    \For{$t=0,1,2,\ldots$}
        \State Simulate one step of \eqref{eq:skmd}: $\x^i_{t+1} \gets \text{SKMD}(\x^i_t, \bar{X}_s, V, k)$

        \If{$t+1=s+\ell$}
            \State Update ensemble: $\bar{X}_{s+\ell} \gets \bigl(\bar{X}_s \setminus \{\x^{i}_s\}\bigr) \cup \{\x^{i}_{s+\ell}\}$ 
            \State Advance ensemble index: $s \gets s + \ell$
            \State Switch active particle: $i \gets (i \bmod J) + 1$
            \State Append to trajectory: $X \gets X \cup \{\x^i_s,...,\x^i_{s+\ell} \}$
        \EndIf
    \EndFor
            
    \State \Return Trajectory $X$ with approximate distribution $\pi \propto \exp(- V)$
\end{algorithmic}
\end{algorithm}

We numerically evaluate the efficacy of SKMD as a sampling algorithm using the M\"uller-Brown potential as an illustrative example.  We compare the distribution of overdamped Langevin dynamics and SKMD in terms of their Wasserstein-2 distance from the Boltzmann distribution of the M\"uller-Brown potential after $1 \times 10^5$ simulation time steps. To study the effect of the asynchronous particle updates introduced by the stopping time on the quality of samples generated, we implement SKMD with four different stopping times, $\ell = \{1, 10^1, 10^2, 10^3\}$. The metric is computed over 100 random trials, where in each trial the Brownian motion realization is identical across sampling methods. \Cref{fig:barplot} shows that samples drawn with SKMD have lower mean error with respect to the invariant distribution and lower variance in error compared to those drawn with overdamped Langevin dynamics. With SKMD, increasing the stopping time increases the discrepancy with respect to the invariant distribution, but still demonstrates better accuracy than Langevin dynamics with $\ell=10^3$. The higher error associated with Langevin dynamics is a consequence of slow mixing and metastability, as the invariant distribution of the M\"uller-Brown potential is non-log concave with well-separated modes. 

\begin{figure}[h!]
    \centering
    \includegraphics[width=0.5\textwidth]{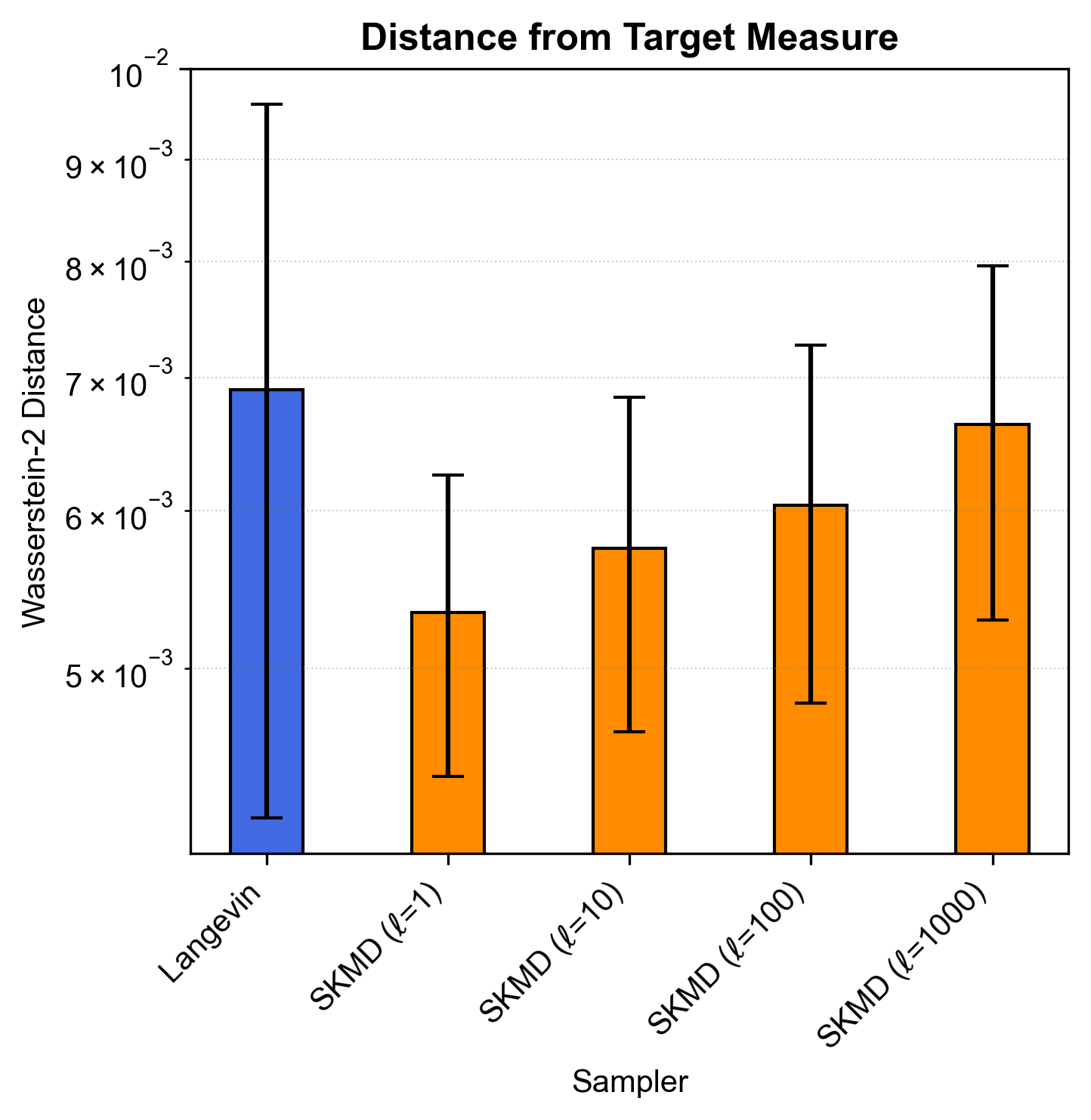}
    \caption{\small{Comparison of the quality of samples from overdamped Langevin dynamics and SKMD with varying stopping time $\ell$. Quality is measured in terms of a sample-based estimator of the Wasserstein-2 distance with respect to the Boltzmann distribution.}}
    \label{fig:barplot}
    \end{figure}

\subsection{SKMD for active learning with offline data acquisition}
\label{app:skmd_al}

SKMD may be paired with subset selection methods for offline data acquisition. In this setting, SKMD is used purely as a sampling algorithm for generating a pool of candidate configurations. We then rely on any subset selection method, such as the ones reviewed in \Cref{sec:intro}, to draw sets of informative data to label with reference calculations and add to the training set. This approach is summarized in \Cref{alg:al_offline}. 

\begin{algorithm}
\caption{SKMD for active learning with offline data acquisition}
\label{alg:al_offline}
\begin{algorithmic}
    \Require{Initial ensemble $\bar{X}_0 = \{\x^{1}_0, \ldots, \x^{J}_0\}$, training set $\mathcal{D}_0$, model $V_{\theta_0}$, kernel $k$, stopping $\ell$} 
    
    \For{training iteration $\tau = 0,1,\ldots$}
        \State Generate SKMD trajectory $X$ from \Cref{alg:samp} using $\bar{X}_0$, $V_{\theta_0}$, $k$, $\ell$
        \State Select training data $\mathcal{D}_{\tau+1} \subset X$ using any subset selection method
        \State Train new model $V_{\theta_{\tau+1}}$ on the augmented training set $\bigcup_{k=0}^{\tau+1} \mathcal{D}_{k}$
    \EndFor
    \State \Return Final model $V_{\theta_\text{final}}$ and training set $\mathcal{D}_\text{final}$
\end{algorithmic}
\end{algorithm}

A common heuristic with SVGD is to utilize a kernel with a variable kernel bandwidth, such as one which is proportional to the median distance between particles, as a strategy to accelerate mixing in high-dimensional state space. When implementing SKMD with a variable kernel bandwidth, offline data acquisition tends to perform better than online data acquisition. The acquisition criterion in \eqref{eq:stopping} remains relatively constant for the median bandwidth heuristic, since the kernel bandwidth adapts such that the kernel terms in the SKMD biasing force are nonzero and the particles remain interactive at further distances. For this reason, in \Cref{sec:mace}, we utilize the median bandwidth heuristic and furthest point sampling to select training configurations for active learning of the MACE interatomic potential.

\section{Additional Background and Related Work}
\label{app:review}

\subsection{Enhanced sampling methods}
\label{app:enhancedsampling}

Biasing force methods are a class of enhanced samplers which modify the force field of the dynamics by the addition of a biasing force; see \cite{Henin2022} for a review. In these methods, the force field in overdamped or underdamped Langevin dynamics is replaced by
\begin{equation}
    \label{eq:biasforce}
    \tilde{F}_{\theta,t}(\x_t) = -\nabla_\x V_\theta(\x_t) + F_t^{\text{bias}}(\x_t) \ .
\end{equation}

In the standard formulation of \textbf{adaptive biasing forces (ABF)} \cite{Darve2001}, the objective is to reduce energy barriers characterized in a reduced coordinate space of collective variables (CVs) $\xi(\x) \in \mathbb{R}^r$, where $r \ll 3N$. At a given point $\x \in \mathbb{R}^{3N}$, ABF adaptively learns and cancels gradient of the free energy in CV space by setting the biasing force to be
\begin{equation}
    \label{eq:abf}
    F^\text{ABF}_{t}(\x_t) = - \nabla_\x \xi(\x)^\top \nabla_{\xi} \mathcal{F}(\xi(\x)),
\end{equation}
where $\nabla_\x \xi$ is the Jacobian of the CV map. The resulting dynamics correspond to uniform sampling of the CV space and enhanced exploration of state space regions separated by high-energy barriers. The performance of ABF depends heavily on the identification of a reliable CV map; otherwise, the method is not guaranteed to improve sampling. Various other methods in the same vein as adaptive biasing force methods have been proposed in the statistics literature which preserve the target invariant measure while altering transient behavior to accelerate convergence; see \cite{Girolami2011,Duncan2017} for discussion on reversible and irreversible perturbations for MCMC. 

Adaptive biasing potential methods are a subclass of adaptive biasing force methods which are obtained when the biasing force is conservative, meaning that it can be written as the gradient of a potential, $F_t^{\text{bias}} = -\nabla_\x V_t^{\text{bias}}$. In this case, the biased dynamics are described by a modified potential energy,
\begin{equation}
    \label{eq:biasenergy}
    \tilde{V}_{\theta,t}(\x_t) = V_\theta(\x_t) + V_t^{\text{bias}}(\x_t) \ .
\end{equation}
The form of the biasing potential $V_t^{\text{bias}}: \mathbb{R}^{3N} \to \mathbb{R}$ depends on the sampling objective. In \textbf{metadynamics} \cite{Barducci2008, Valsson2016}, sampling is promoted along selected directions of the state space defined by CVs in order to characterize the free energy landscape or rare-event kinetics. The biasing potential in metadynamics is a sum of repulsive Gaussian kernels which are deposited at times $\mathcal{S}_\tau = (s_1,...,s_\tau)$ along the simulation path in CV space,

\[
V_t^{\text{meta}}(\x_t)=\sum_{s \in \mathcal{S}_\tau} a_s \exp \Big( - \frac{||\xi(\x_s) - \xi(\x_t)||^2}{2b_s^2} \Big) \ ,
\]
where $a_s, b_s > 0$ are the magnitude and length scale parameters of the Gaussian kernel at time $s$. Metadynamics acts to ``fill'' visited regions with repulsive kernels until the modified free energy landscape is leveled and sampling becomes approximately uniform in the CV space. As with ABF, metadynamics relies on the availability of an informative collective variable. 

In \textbf{uncertainty-driven dynamics (UDD)} \cite{Kulichenko2023}, a biasing potential is introduced to enhance the sampling of uncertain regions of the free energy landscape for the purpose of active learning. Given a committee of model potentials $(\hat{V}^{(m)}_t)_{m \in [1,M]}$ and their mean prediction $\bar{V}_t = \frac{1}{M} \sum_{m=1}^{M} \hat{V}^{(m)}_t$,  a biasing potential is constructed as
\[
V_t^{\text{UDD}}(\x_t) = a \Big[ \exp \Big( -\frac{ \sigma^2_V(\x_t)}{NMb^2} \Big) - 1 \Big] \ ,
\]
where $a,b > 0$ are scale parameters and $\sigma_V^2$ is the empirical variance of energy predictions by models in the committee,
\begin{equation}
\label{eq:empvariance}
\sigma_V^2(\x_t) = \frac{1}{M} \sum_{m=1}^{M} (\hat{V}^{(m)}_t(\x_t) - \bar{V}_t(\x_t))^2 \ .
\end{equation}
The model committee in \cite{Kulichenko2023} is constructed via $k$-fold cross validation, with each model trained on a distinct partition of the dataset. 

In most cases, the biasing potential alters the invariant measure of the dynamics. Assuming the biasing potential remains time-invariant or approaches a stationary limit, $\lim_{t \to \infty} V^\text{bias}_t = V^\text{bias}$, the state dynamics converge to a biased Boltzmann distribution, 

\begin{equation}
\tilde{\pi}_\theta(\x) = \frac{1}{\hat{Z}} \exp (-\tilde{V}_{\theta}(\x)) = \frac{1}{\hat{Z}} \exp \big(-V_\theta(\x) - V^\text{bias}(\x)) \ .
\end{equation}
Samples from $\tilde{\pi}_\theta$ may still be used to calculate summary statistics of the system under the original Boltzmann distribution $\pi_\theta$ through an appropriate importance reweighting scheme. For some measurable function of positions $f(\x)$, its expectation with respect to $\pi_\theta$ can be computed as

\begin{equation}
    \mathbb{E}_{\x \sim \pi_\theta}[f(\x)] = \mathbb{E}_{\x \sim \tilde{\pi}_\theta} \big[ f(\x) w(\x) \big] \ ,
\end{equation}
where $w(\x)$ is the self-normalized importance sampling reweighting function given by 

\begin{equation}
    w(\x) = \frac{\omega(\x)}{\int \omega(\x) dx}, \quad \omega(\x) = \frac{ \exp(- V_\theta )}{\exp(-V_\theta - V^\text{bias})} = \exp(V^\text{bias}) \ .
\end{equation}
Note that $\omega$ is the ratio of the unnormalized forms of $\pi_\theta$ and $\tilde{\pi}_\theta$. Since the reweighting function depends exponentially on the biasing potential, even moderate distortions of the biasing potential can induce highly variable weights, as well as high variance or numerical instability of the importance sampling estimator. 

\subsection{Variational inference and gradient flows}
\label{app:varinf}

In various sampling and generative modeling problems, the goal is to approximate a target density $\pi$ on $\mathcal{X}$ with a set of samples. Variational inference is a class of techniques which find an optimal density $q^*$, typically from a parametric family $\mathcal{Q}$, which minimizes the Kullback-Liebler (KL) divergence from the target density, 
    \begin{equation}
    \label{eq:varinf}
    q^* \in \underset{q \in \mathcal{Q}}{\text{argmin}} \ \mathbb{D}_{\textup{KL}}(q || \pi) \ .
    \end{equation}
Recently, a class of particle-based variational inference techniques have emerged which approximate a target density with the empirical distribution of a dynamic set of particles. The core principle behind these methods is to view variational inference as a gradient flow of KL divergence on the space of probability distributions. The evolution of a distribution $q$ over time is given by the continuity equation,
\begin{equation}
    \label{eq:ctyeqn}
    \frac{\partial q_t}{\partial t} = -\nabla \cdot (q_t \phi_t) \ ,
\end{equation}
where $\phi_t$ is a velocity field moving probability mass. One can discretize the flow in \eqref{eq:ctyeqn} with particles whose dynamics are governed by
\begin{equation}
\frac{\upd \x_t}{\upd t} = \phi_t (\x_t), \quad \x_0 \sim q_0,
\end{equation}
such that $q_t = \text{Law}(\x_t)$. The velocity field $\phi_t$ is not uniquely determined a priori, but depends on the choice of geometric structure on the space of probability measures \cite{Ambrosio2008}. For instance, the law of overdamped Langevin dynamics evolves according to the Fokker-Planck equation, which can be written in the form of a continuity equation and interpreted as the Wasserstein gradient flow of KL divergence \cite{Liu2017}.

\textbf{Stein variational gradient descent (SVGD)}, introduced in \cite{Liu2016}, is a particle-based variational inference algorithm which corresponds to a gradient flow restricted to velocity fields that lie in a reproducing kernel Hilbert space (RKHS). This constraint yields a closed-form expression for the steepest descent direction of KL divergence within the RKHS and a tractable particle-based algorithm. 

In SVGD, one solves \eqref{eq:varinf} by constructing a sequence of empirical measures $(\hat{q}_0, \hat{q}_1, ..., \hat{q}_t)$ which converges weakly on the target distribution $\pi$. Beginning with the empirical measure $\hat{q}_0(\x) = \frac{1}{J} \sum_{j=1}^J \delta(\x - \x^{j})$ of an initial set of particles $\bar{X}_0 = \{\x_0^{j} \}_{j=1}^J$, we take the empirical measure at iteration $t$ to be $\hat{q}_t = T_{t \sharp} \hat{q}_0$, or the pushforward of $\hat{q}_0$ under some transport map $T_t$. Equation \eqref{eq:varinf} then becomes the minimization of discrepancy between $T_{t \sharp} q_0$ and the target distribution, 
    \begin{equation}
    \label{eq:svgd_transform}
    T_t^* \in \underset{T_t}{\text{argmin}} \ \mathbb{D}_{\textup{KL}}(T_{t \sharp} q_0 || \pi) \ ,
    \end{equation}
where $q_0$ is the mean-field measure corresponding to $\hat{q}_0$. The transport map at iteration $t$ is composed of intermediate maps $T_t = \bar{T}_1 \circ \bar{T}_2 \circ ... \circ \bar{T}_t$ which take the form $\bar{T}_t(\x) = \x + \epsilon \phi_t(\x)$. The transformation consists of perturbing the positions of particles at time $t$ by a step size $\epsilon$ in the direction $\phi_t$, which belongs to a class of bounded functions $\mathcal{F} \coloneqq \{ f : ||f||_\mathcal{F} \leq 1 \}$. We can then reframe the optimization problem to be one of choosing $\phi_t$ to be the direction of steepest gradient descent, maximally decreasing KL divergence with respect to the target distribution, i.e.
    \begin{equation}
    \label{eq:svgd_perturb}
    \phi_t^* \in \underset{\phi_t \in \mathcal{F}}{\text{argmax}} \ \big[ -\nabla_\epsilon \mathbb{D}_{\textup{KL}}(T_{t\sharp} q_0 || \pi) |_{\epsilon=0} \big] \ . 
    \end{equation}
It is shown in \cite{Liu2016, Liu2017} that a practical approach to solving the above optimization problem is to project $\phi_t$ onto an reproducing kernel Hilbert space (RKHS). We choose $\mathcal{F}$ to be the RKHS $\mathcal{H}^n$ for $\mathcal{X} \subseteq \mathbb{R}^n$, defined by a symmetric positive semi-definite kernel $k: \mathcal{X} \times \mathcal{X} \to \mathbb{R}$. Then we have a closed-form expression for the gradient of KL divergence, given by
    \begin{subequations}
        \label{eq:stein}
    	\begin{gather}
    	\nabla_\epsilon \mathbb{D}_{\textup{KL}}( q_t || \pi) |_{\epsilon=0} = - \langle \phi_t, \psi_{t} \rangle_{\mathcal{H}^n} \\
        \label{eq:psi}
    	\psi_{t}(\cdot) = \mathbb{E}_{\x \sim q_t} [k(\x, \cdot) \nabla_{\x} \log \pi(\x) + \nabla_{\x} k(\x, \cdot)] \ .
    	\end{gather}
    \end{subequations}
It follows that the solution to \eqref{eq:svgd_perturb}, giving the steepest descent direction at  step $t$, is $\phi_t^* = \psi_{t}$.

The flow of empirical measures corresponds to the state evolution of the interacting particle set, represented by the ordinary differential equation
    \begin{subequations}
        \label{eq:steinode2}
        \begin{gather}
	\upd \x^{i}_t = \phi^*_t(\x^{i}) \upd t, \\
    \label{eq:v_svgd}
    \phi^*_t(\cdot) = \frac{1}{J} \sum_{j=1}^J \Big[ k(\x^{j}_t, \cdot) \nabla_{\x^{j}_t} \log \pi(\x^{j}_t) + \nabla_{\x^{j}_t} k(\x^{j}_t, \cdot) \Big],
        \end{gather}
    \end{subequations}
for all $\x^{i}_t,\x^{j}_t \in \bar{X}_t$. As $J \to \infty$, the mean field limit of \eqref{eq:steinode2} is the following, as shown in \cite{Liu2017,Duncan2023}:
\begin{equation}
    \label{eq:meanfieldlimit}
    \frac{\partial}{\partial t} q_t(\x) = \nabla \cdot \Bigg( q_t(\x) \int_{\mathcal{X}} k(\x,\x') \big( \nabla_{\x'} q_t(\x') - q_t(\x') \nabla_{\x'} \log \pi(\x') \big) \upd \x' \Bigg) \ .
\end{equation}
 Under appropriate assumptions on the kernel, target measure, and initial condition, the solution to \eqref{eq:meanfieldlimit} exists and is unique \cite[Thm. 3.2]{Lu2019}.
 
 It is straightforward to verify that $\pi$ is a stationary point of \eqref{eq:meanfieldlimit}, where $\frac{\partial q_t}{\partial t} = 0$ when $q_t = \pi$. A proof of qualitative convergence was provided in \cite[Thm. 2.8]{Lu2019}, which showed the conditions under which $q_t$ converges weakly to $\pi$ as $t \to \infty$. To prove convergence with quantitative rates, the authors of \cite{Duncan2023} first recognize that that SVGD defines a gradient flow of KL divergence not in the Wasserstein geometry, but in the Stein geometry induced by the kernel. In this geometry, the KL functional is not geodesically convex, meaning that in general one cannot establish functional inequalities that would prove global exponential convergence to the target measure. However, one can prove exponential convergence provided that one has local strong convexity and starts close to the target measure at equilibrium \cite[Lemma 28]{Duncan2023}.

\subsection{Kernel herding}
\label{app:herding}

Kernel herding \cite{Chen2010} is an approach to constructing a discrete approximation of a probability distribution using a finite set of points, referred to as support points \cite{Mak2017,Mak2018} or coresets \cite{Dwivedi2024,DomingoEnrich2023,Moser2025} in the literature. For a kernel $k(\cdot,\cdot)$ which defines a reproducing kernel Hilbert space $\mathcal{H}$, the kernel mean embedding of a probability distribution $\pi$ in $\mathcal{H}$ is defined as
\[
\mu_\pi \coloneqq \mathbb{E}_{\x \sim \pi}[k(\x, \cdot)] \ .
\]
The goal is then to approximate $\mu_\pi$ with an empirical embedding 
\[
\hat{\mu}_N = \frac{1}{N} \sum_{i=1}^N k(\x_i, \cdot) \ ,
\]
defined by $N$ points which minimize the maximum mean discrepancy $\text{MMD}^2(\hat{q}_N, \pi) \coloneqq || \hat{\mu}_N - \mu_\pi ||^2_\mathcal{H}$, where $\hat{q}_N = \tfrac{1}{N} \sum_{j=1}^N \delta(\x - \x^j)$ is the empirical distribution of the point set. Since global minimization of the objective over point sets is generally intractable, kernel herding finds a greedy solution to the minimization problem: for $n=1,...,N$, given a set of previously chosen points $\{ \x_i \}_{i=1}^n$, kernel herding selects the next point as the one which decreases $\text{MMD}^2(\hat{q}_n, \pi)$ according to
\begin{equation}
    \label{eq:kernelherd}
    \x_{n+1} \in  \underset{x \in \mathcal{X}}{\text{argmin}} \Bigg[ \frac{1}{n} \sum_{i=1}^n k(\x_i, \x) - \mathbb{E}_{\x' \sim \pi}[k(\x', \x)] \Bigg] \ .
\end{equation}

\textbf{Stein herding} \cite{Chen2018} is an analogous approach to kernel herding which targets the minimization of kernel Stein discrepancy. Let $\mathcal{S}_\pi$ be the Stein operator where
\[
\mathcal{S}_\pi \otimes k(\x, \cdot) \coloneqq \nabla_\x \log \pi(\x)k(\x, \cdot) + \nabla_\x k(\x, \cdot) \ .
\]
Kernel Stein discrepancy is defined as $\textup{KSD}^2(\hat{q}_n, \pi) \coloneqq \mathbb{E}_{\x,\x' \sim \hat{q}_n}[\kappa_\pi(\x,\x')]$, where $\kappa_\pi: \mathcal{X} \times \mathcal{X} \to \mathbb{R}$ is the Stein kernel
\[
    \begin{aligned}
    \kappa_{\pi}(\x,\x') & \coloneqq \mathcal{S}^{\x}_{\pi} \mathcal{S}^{\x'}_{\pi} \otimes k(\x, \x') \\
    & = \nabla_\x \cdot \nabla_{\x'} k(\x, \x') \\
    & \quad + \nabla_\x k(\x, \x') \cdot \nabla_{\x'} \log \pi(\x') \\
    & \quad + \nabla_{\x'} k(\x, \x') \cdot \nabla_{\x} \log \pi(\x) \\
    & \quad + k(\x, \x') \nabla_{\x} \log \pi(\x) \cdot \nabla_{\x'} \log \pi(\x') \ .
    \end{aligned}
\]

In Stein herding, the next point which minimizes $\textup{KSD}^2(\hat{q}_n, \pi)$ is given by
\begin{equation}
    \label{eq:steinherd}
    \x_{n+1} \in  \underset{x \in \mathcal{X}}{\text{argmin}} \Big[ \frac{1}{n} \sum_{i=1}^n \kappa_{\pi}(\x_i, \x) \Big] \ ,
\end{equation}
which is identical to the expression in \eqref{eq:steinpts}. As noted in \cite{Chen2018}, each $n$th iteration of Stein herding or kernel herding requires the solution to a global optimization problem over $\mathcal{X}$, which the authors perform using numerical optimization methods such as the Nelder-Mead algorithm and brute search. 

\begin{remark}
    \label{rmk:acquisition}
    The acquisition function in \eqref{eq:stopping} can be compared to the greedy update equation \eqref{eq:steinherd} for Stein points as follows. The SKMD biasing force \eqref{eq:skmd_force} is derived from the velocity field of SVGD \eqref{eq:steinexpec}, which corresponds to 
    \[
    \phi^*_{\hat{q}_n,\pi}(\cdot) = \mathbb{E}_{\x \sim \hat{q}_n}[\mathcal{S}_{\pi} \otimes k(\x, \cdot)] = \mathbb{E}_{\x \sim \hat{q}_n}[-\beta \nabla_\x V (\x) k(\x, \cdot) + \nabla_\x k(\x, \cdot)] \
    \]
    for $\pi \propto \exp(-\beta V)$. Kernel Stein discrepancy can be expressed as as the RKHS norm of the velocity field,  
    \[
    \textup{KSD}^2(\hat{q}_n, \pi) = \mathbb{E}_{\x,\x' \sim \hat{q}_n}[\mathcal{S}^{\x}_{\pi} \mathcal{S}^{\x'}_{\pi} \otimes k(\x, \x')] = ||\phi^*_{\hat{q}_n,\pi}||^2_\mathcal{H} .
    \]
    The acquisition function $\alpha_s$ in \eqref{eq:stopping} corresponds to a $L^2(\mathcal{X})$ norm of the velocity field, i.e., $\alpha_s(\x) = ||\phi^*_{\hat{q}_n,\pi}(\x)||^2_2$. Therefore, the acquisition criterion does not relate to the minimization of KSD, as the objective differs in terms of the Hilbert space norm. In practice, the $L^2(\mathcal{X})$ norm of $\phi^*_{\hat{q}_n,\pi}$ is simpler to compute in an online fashion compared to the RKHS norm, as it does not require the calculation of additional gradients or second-order terms at each simulation step. For future work, we plan to expand upon our initial studies in evaluating the efficacy of kernel Stein discrepancy in defining an acquisition criterion.
\end{remark}

\section{Additional Discussion}
\label{app:discussion}

\subsection{Online data acquisition}
\label{app:adaptstop}

In the numerical studies of \Cref{sec:nn}, we compare the online data acquisition schemes of UDD and a-SKMD. UDD utilizes an acquisition function defined as the scaled standard deviation of committee predictions,
\begin{equation}
\label{eq:uncertainty}
\rho(\x) \coloneqq \sqrt{\tfrac{2}{M}} \sigma_V(\x) \ ,
\end{equation}
where $\sigma_V$ is defined in \eqref{eq:empvariance}. At the point where the uncertainty metric $\rho$ exceeds a threshold, i.e., $\rho(\x_t) > \zeta_1$, the point is collected as new training datum. 

\Cref{fig:adaptstop} compares the acquisition functions of a-SKMD and UDD after one iteration of active learning. The realization of Brownian motion for the path simulation is fixed between the two schemes shown. The SKMD acquisition function, defined as the norm of the SKMD biasing force, is informed by both the underlying potential and the kernel. Panel (c) of \Cref{fig:adaptstop} shows that the SKMD aquisition function is low near other particles in the ensemble which lie in energy basins, as a result of the attractive component of the biasing force, while the function is high near other particles located in high energy regions, as a result of the repulsive component of the biasing force. Therefore, the criterion serves to select points which are diverse with respect to the other particles but also representative of the underlying energy landscape. Moreover, the threshold of the SKMD criterion spans a large area of state space, such that with each active learning iteration, the criterion leads to a well-defined radial expansion in state space of selected points. Panel (d) shows that the UDD acquisition function increases with distance from the existing training data, with regions of high uncertainty concentrated in certain domains of the state space. The location of these regions does not correlate with the underlying energy landscape. Moreover, the UDD criterion does not have a means to enforce diversity among points selected at a given active learning iteration, which can lead to clustering in the high-uncertainty regions. As a result, the data selected by the SKMD criterion often exhibit greater spread in the state space compared to those by the UDD criterion, as shown in panels (a) and (b).

\begin{figure}[h!]
    \centering
    \includegraphics[width=0.95\linewidth]{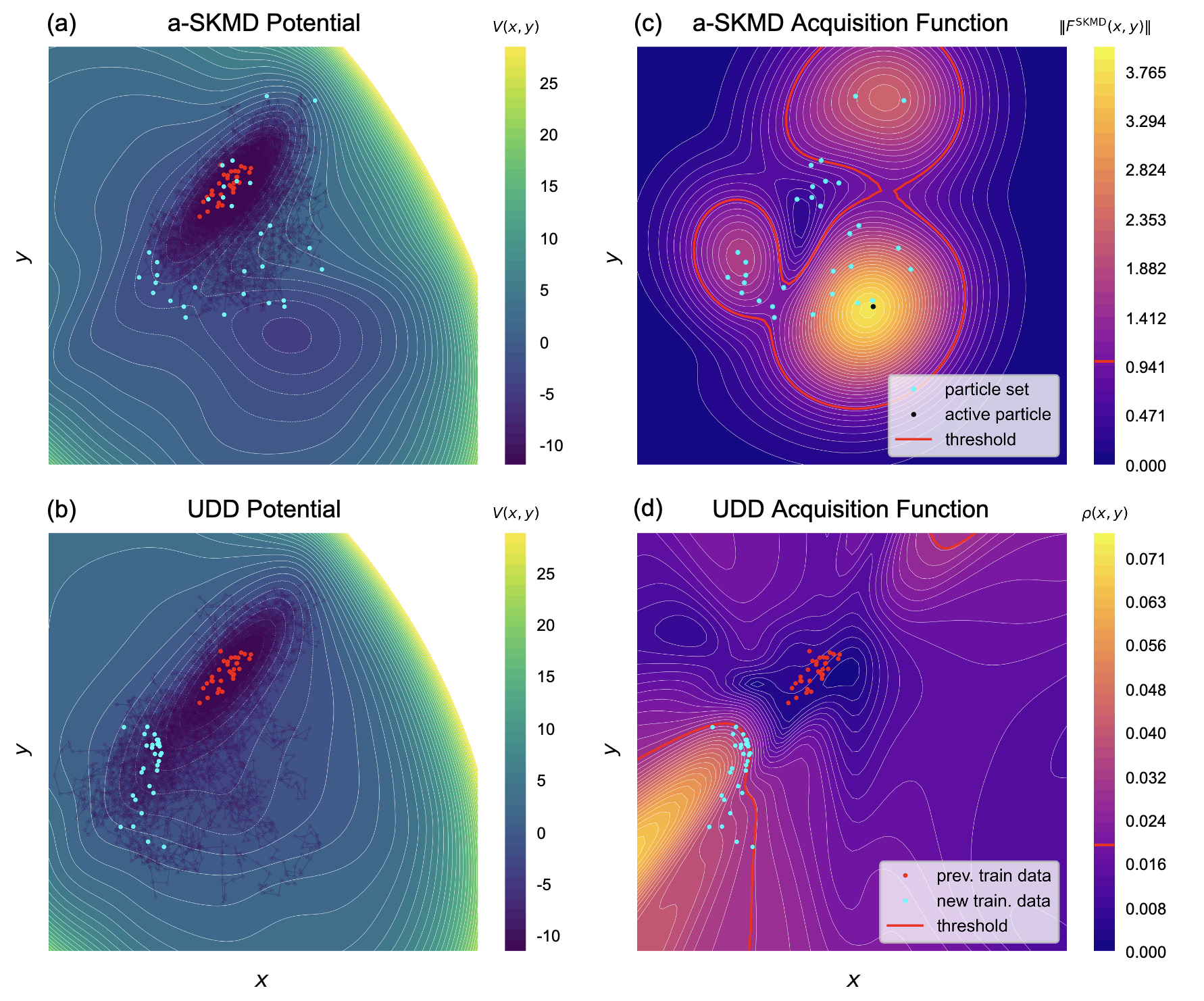}
    \caption{\small{The neural network potential (contours), previous training data (red), and selected data (cyan) corresponding to 1 iteration of active learning by a-SKMD (a) and UDD (b). The corresponding contours of the acquisition function, selected data (cyan), and threshold for the acquisition criterion (red line) for a-SKMD (c) and UDD (d).}}
    \label{fig:adaptstop}
\end{figure}

\section{Experimental Details}
\label{app:experiment}

\subsection{Neural network potential}
\label{app:exp_nn}

For all active learning schemes in the comparison, we fix the temperature parameter to be $\beta = 1.0$ and perform numerical simulation with a time step $\epsilon=0.002$. In each trial, a maximum of 16 active learning iterations are performed and 32 new data points are queried in each active learning iteration. 

\textbf{Data acquisition schemes.} In the na\"ive data acquisition scheme utilized in overdamped Langevin dynamics and SKMD without adaptive stopping, we collect data every 100 simulation steps and stop the simulation after 32 points have been collected. SKMD is implemented according to \Cref{alg:al_offline} using $\ell=100$. 

We implement UDD using the uncertainty-based acquisition criterion \eqref{eq:uncertainty} specified in \cite{Kulichenko2023}. From a pilot simulation, we determined $\zeta_1=0.02$ to be an appropriate uncertainty threshold. In order to reduce redundancy in the collected points, we further impose that UDD performs a minimum of 50 simulation steps before the next datum is collected. 

a-SKMD utilizes the acquisition function defined by the norm of the SKMD biasing force in \eqref{eq:stopping}, requiring a minimum number of simulation steps $\ell_0=50$ before stopping. We determine an appropriate value for the threshold is $\zeta_0=1.0$ by tracking the acquisition function $\alpha_{s}$ over a pilot simulation.

\textbf{SKMD parameters.} Both SKMD and a-SKMD utilize $A=1$ and a Gaussian kernel defined by the Euclidean distance in state space with amplitude $a=2$ and fixed length scale $b=0.41$, which is the median distance between elements in the initial ensemble. For both schemes, the interacting set of 32 points is initialized at the initial training set. In each active learning iteration, all particles are simulated one at a time and the data are selected to be the stopping points of each particle. 

\subsection{MACE fine-tuning}
\label{app:exp_mace}

\textbf{Training protocol.} The same training protocol was used for every active learning iteration and for all sampling methods. We use the Low-Rank Adaptation (LoRA) method~\cite{hu2022lora} for the fine-tuning of model parameters, in which pretrained model weights are adapted to new data.  In each case, the initial model was fine-tuned for 100 epochs using LoRA with rank 4 and scaling factor $\alpha=1.0$. The isolated atom energies were consistent between the pre-trained model and the oracle-labeled dataset, so multi-head fine-tuning was disabled. Training used a two-stage schedule: during the first 50 epochs, the force and energy loss weights were 1000 and 40, respectively, with a learning rate of 0.01. From epoch 50 onward, the force and energy weights were changed to 10 and 1000, respectively, and the learning rate was reduced to 0.001. 

\textbf{SKMD outer-loop schedule.} The 300 outer iterations are split into an exploration phase (iterations 1--250) and a relaxation phase (iterations 251--300). During exploration, the kernel amplitude is held fixed at $a=3.0$ and the bandwidth $b$ is set adaptively at each iteration to the median pairwise Euclidean distance between the descriptor vectors of all particles. During the relaxation phase, both $a$ and $b$ decay linearly from their values at the end of exploration to half those values by iteration 300. This relaxation allows the sampler to settle into low-energy regions of the landscape.

\textbf{Structure selection.} Candidate descriptor vectors are $L^2$-normalised prior to computing pairwise distances in the furthest-point sampling step. When selecting structures at iteration $t > 1$, the descriptors of all previously selected structures are used to initialise the minimum-distance field, so that newly selected structures are maximally diverse relative to the entire accumulated dataset and not only relative to the current batch of candidates.

\textbf{Evaluation metrics.} Energy error is reported as the root mean square error (RMSE) in the total potential energy over all 300 test structures. Force error is the RMSE of all $3N$ force components across the test set. Both are computed relative to oracle (MACE-OFF-23-small) predictions.

\subsection{Compute Infrastructure}
\label{app:compute}

We performed experiments with the neural network potential on x86 server CPUs (AMD EPYC and Intel Xeon processors) using batch SLURM jobs, each with 2 CPU cores, 80GB memory for the Langevin, SKMD, and a-SKMD runs, and 200GB memory for the UDD runs. The typical run times per job were 5 minutes for the Langevin scheme, 10 minutes for the SKMD and a-SKMD schemes, and 1 hour for the UDD scheme. For the MACE examples, the active learning loops were conducted on the Isambard-AI Phase 2 system at the Bristol Centre for Supercomputing, which is built from HPE Cray EX nodes containing four NVIDIA GH200 Grace Hopper Superchips per node. Molecular dynamics simulations were run as single-core jobs on the NVIDIA Grace CPU (72 Arm Neoverse V2 cores, Armv9.0-A, 3.1 GHz, with 4×128-bit SVE2 vector units and 120 GB of co-packaged LPDDR5X memory), while MACE training was performed on the NVIDIA H100 Hopper GPU (96 GB HBM3) of the same superchip, accessed via the 900 GB/s coherent NVLink-C2C interface. Total resource usage for results generation was minimal, at approximately 6 CPU-hours and 4 GPU-hours across all MD and MACE training.

\section{Societal Impacts}
\label{app:impact}

We develop a method for improving the efficiency of atomistic simulations, with the potential to accelerate scientific discovery in chemistry, materials science, and drug design. By reducing the number of expensive quantum-mechanical calculations required for training MLIPs, our method can lower the computational cost and energy consumption of atomistic simulation workflows. While we do not foresee any direct negative societal impacts, atomistic simulation tools could be misused---for example, for the development of biochemical weapons. Such risks are not specific to our method.






\end{document}